\titleformat*{\section}{\large\bfseries}
\newtheorem{theorem}{Theorem}
\newtheorem{lemma}{Lemma}
\newtheorem{proposition}{Proposition}
\newtheorem{definition}{Definition}
\newtheorem{corollary}{Corollary}
\newtheorem{assumption}{Assumption}
\newtheorem{example}{Example}
\newcommand{\defeq}{\overset{\mathrm{def}}{=}}
\newcommand{\disteq}{\overset{\mathrm{d}}{=}}
\def\pd<#1>{\left\langle #1 \right\rangle}
\def\floor[#1]{\left\lfloor #1 \right\rfloor}
\def\ceil[#1]{\left\lceil #1 \right\rceil}
\newcommand{\rd}{\mathrm{d}}
\newcommand{\bE}{\mathbb{E}}
\newcommand{\bN}{\mathbb{N}}
\newcommand{\bR}{\mathbb{R}}
\newcommand{\cD}{\mathcal{D}}
\newcommand{\cL}{\mathcal{L}}
\newcommand{\cN}{\mathcal{N}}
\newcommand{\cP}{\mathcal{P}}
\newcommand{\cX}{\mathcal{X}}
\newcommand{\KL}{\mathrm{KL}}
\title{Convex Analysis of the Mean Field Langevin Dynamics}
\author{Atsushi Nitanda$^{1\dag}$, Denny Wu$^{2\ddag}$, Taiji Suzuki$^{3\star}$
\vspace{2mm}\\
\normalsize{\textit{$^1$Kyushu Institute of Technology and RIKEN Center for Advanced Intelligence Project}} \\
\normalsize{\textit{$^2$The University of Toronto and Vector Institute for Artificial Intelligence}} \\
\normalsize{\textit{$^3$The University of Tokyo and RIKEN Center for Advanced Intelligence Project}} \\
\small{Email: $^\dag$nitanda@ai.kyutech.ac.jp, $^\ddag$dennywu@cs.toronto.edu, $^\star$taiji@mist.i.u-tokyo.ac.jp}} 
\date{}
\begin{document}
\twocolumn
\maketitle

\begin{abstract}
As an example of the nonlinear Fokker-Planck equation, the \textit{mean field Langevin dynamics} recently attracts attention due to its connection to (noisy) gradient descent on infinitely wide neural networks in the mean field regime, and hence the convergence property of the dynamics is of great theoretical interest. In this work, we give a concise and self-contained convergence rate analysis of the mean field Langevin dynamics with respect to the (regularized) objective function in both continuous and discrete time settings. The key ingredient of our proof is a \textit{proximal Gibbs distribution} $p_q$ associated with the dynamics, which, in combination with techniques in \cite{vempala2019rapid}, allows us to develop a simple convergence theory parallel to classical results in convex optimization. Furthermore, we reveal that $p_q$ connects to the \textit{duality gap} in the empirical risk minimization setting, which enables efficient empirical evaluation of the algorithm convergence.
\end{abstract}

\section{INTRODUCTION}

Consider a neural network with $M$ trainable neurons parameterized as
\begin{equation}\label{eq:nn} 
    h_\Theta(x) \defeq \frac{1}{M}\sum_{r=1}^M h_{\theta_r}(x),  
\end{equation}
where each neuron $h_{\theta_r}$ contains trainable parameters (weights) $\theta_r$ and some nonlinear transformation (e.g., $h_{\theta_r}(x) = \sigma(\langle\theta_r,x\rangle)$ where $\sigma$ is the nonlinear activation function), $x$ is a data example, and $\Theta = \{\theta_r\}_{r=1}^M$. Under suitable conditions, as $M\to\infty$ we obtain the \textit{mean field limit}: $h_q(x) := \bE_q[h_\theta(x)]$, where $q(\theta)\rd\theta$ represents the probability distribution of the weights; we refer to $h_q$ simply as a mean field model (neural network). 
In this limit, training can be formulated as an optimization problem over the space of probability measures. 
An advantage of the mean field regime, in contrast to alternative settings such as the neural tangent kernel regime~\citep{jacot2018neural}, is the presence of (nonlinear) \textit{feature learning} \citep{suzuki2018adaptivity,ghorbani2019limitations}. However, developing an optimization theory for mean field neural networks is also more challenging. 
 
Optimization analysis of mean field neural networks usually utilizes the convexity of the objective function in the space of probability measures. \cite{nitanda2017stochastic,chizat2018global,mei2018mean} established global convergence of gradient descent (flow) on two-layer neural networks in the mean field regime under appropriate conditions. Subsequent works proved convergence rates under additional structural assumptions \citep{javanmard2019analysis,chizat2021sparse}. 
One noticeable algorithmic modification is the addition of Gaussian noise to the gradient, which leads to the \textit{noisy} gradient descent algorithm; this modification gives rise to an entropy regularization term in the objective, and allows for global convergence analysis under less restrictive settings \citep{rotskoff2018parameters,mei2019mean}. The corresponding stochastic dynamics is often referred to as the \textit{mean field Langevin dynamics}.  

Recent works have studied the convergence rate of the mean field Langevin dynamics, including its underdamped (kinetic) version. However, most existing analyses either require sufficiently strong regularization \citep{hu2019mean,jabir2019mean}, or build upon involved mathematical tools \citep{kazeykina2020ergodicity,guillin2021kinetic}. 
Our goal is to provide a simpler convergence proof that covers general and more practical machine learning settings, with a focus on neural network optimization in the mean field regime. 
Motivated by an observation in \cite{nitanda2020particle} that the \textit{log-Sobolev inequality} can simplify the global convergence analysis of two-layer mean field neural network, we study the optimization efficiency of the mean field Langevin dynamics in the context of KL-regularized empirical/expected risk minimization, and present a new convergence rate analysis by translating the finite-dimensional convex optimization theory to the optimization in the space of measures.

\begin{figure}[!htb]  
% \vspace{-4mm} 
\centering
\begin{minipage}[t]{0.98\linewidth} 
{\includegraphics[width=1\linewidth]{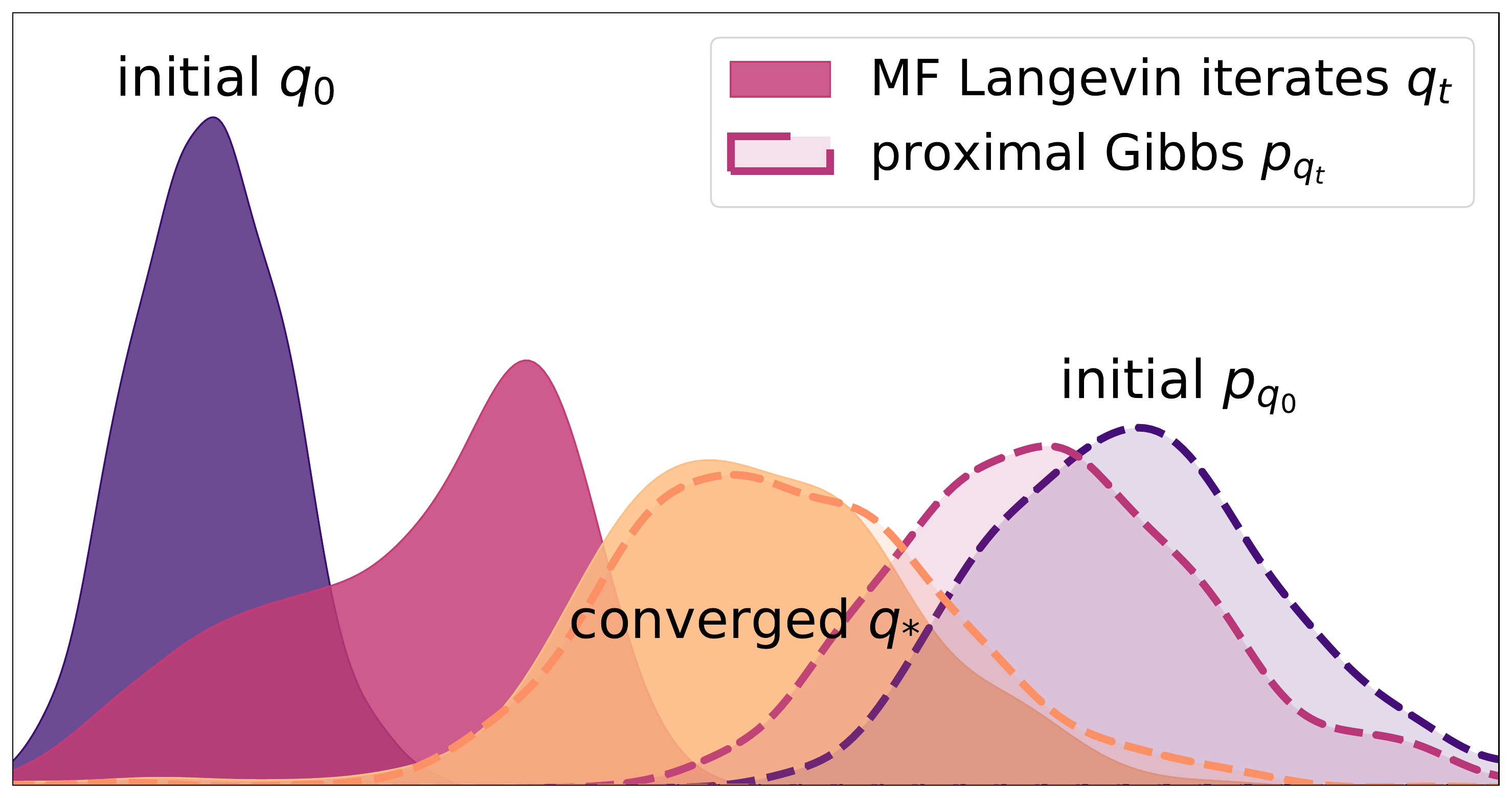}}  
\end{minipage}
% \vspace{-0.2mm}
\captionof{figure}{\small 1D visualization of mean field two-layer neural network (tanh) optimized by noisy gradient descent ($\lambda=\lambda'=10^{-2}$). Observe that both the parameter distribution $q$ and the corresponding proximal Gibbs distribution $p_q$ converge to the optimal $q_*$. Moreover, $q_t$ and $p_{q_t}$ approach $q_*$ from opposite directions, as predicted by Proposition~\ref{prop:optimization_gap_for_strong_convex_problems}.}   
\label{fig:pq_illustration} 
% \vspace{-0.5mm}
\end{figure} 

\subsection{Contributions}
 
In this work, we give a simple and self-contained convergence rate analysis of the mean field Langevin dynamics in both continuous and discrete time settings. The key ingredient of our proof is the introduction of a {\it proximal Gibbs distribution} $p_q$ (see Figure~\ref{fig:pq_illustration}), which relates to the optimization gap and allows us to directly apply standard convex optimization techniques. In particular,
\begin{itemize}[leftmargin=*,itemsep=0.1mm,topsep=0.1mm]
    \item By analyzing the proximal Gibbs distribution $p_q$, we establish linear convergence in continuous time with respect to the KL-regularized objective. 
    This convergence result holds for \textit{any} regularization parameters, in contrast to existing analyses (e.g., \cite{hu2019mean}) that require strong regularization.
    \item We also provide global convergence rate analysis for the discrete-time update. This is achieved by extending the classical ``one-step interpolation'' argument in the analysis of Langevin dynamics (e.g., see \cite{vempala2019rapid} for the KL case) to our nonlinear Fokker-Planck setting. 
    \item Finally, we present an interpretation of the proximal Gibbs distribution via the primal-dual formulation of empirical risk minimization problems, and reveal that $p_q$ exactly fills the duality gap. This interpretation leads to alternative ways to evaluate the convergence of the algorithm. 
\end{itemize}

\subsection{Related Literature}

\paragraph{Convergence of the Langevin algorithm.} 
The Langevin dynamics can be interpreted as the (Wasserstein) gradient flow of KL divergence with respect to the target distribution $p\propto\text{exp}(-f)$ \citep{jordan1998variational}; in other words, the Langevin dynamics solves the following optimization problem, 
\begin{equation}\label{eq:linear-functional}
 \min_{q: \mathrm{density}} \left\{ \bE_q[f] + \bE_q[\log(q)] \right\}. 
\end{equation}
We refer readers to \cite{dalalyan2017further,wibisono2018sampling} for additional discussions on the connection between sampling and optimization. 

It is well-known that when the target distribution satisfies certain isoperimetry conditions such as the log-Sobolev inequality, then the (continuous-time) Langevin dynamics converges exponentially \citep{bakry2013analysis}. 
On the other hand, the time-discretized dynamics, or the Langevin algorithm, admits a biased invariant distribution depending on the step size and the numerical scheme \citep{milstein2013stochastic,li2019stochastic}, and sublinear convergence rate has been established under different target assumptions \citep{vempala2019rapid,erdogdu2020convergence} and distance metrics \citep{dalalyan2014theoretical,durmus2017nonasymptotic,erdogdu2021convergence}.

\paragraph{Mean field regime and nonlinear Fokker-Planck.}
Analysis of neural networks in the mean field regime typically describes the optimization dynamics as a partial differential equation (PDE) of the parameter distribution, from which convergence to the global optimal solution may be shown \citep{chizat2018global,mei2018mean,rotskoff2018parameters,sirignano2020mean}. 
Quantitative convergence rate usually requires additional conditions, such as structural assumptions on the learning problem \citep{javanmard2019analysis,chizat2021sparse,akiyama2021}, or modification of the dynamics \citep{rotskoff2019global,wei2019regularization}.
Noticeably, \citet{mei2018mean,hu2019mean,chen2020generalized} considered the optimization of the \textit{KL-regularized objective}, which leads to the mean field Langevin dynamics. 
   
Note that the optimization of mean field neural networks (e.g., Eq.~\eqref{eq:mean-field-nn}) falls beyond the scope of Langevin dynamics whose density function follows \textit{linear} Fokker-Planck equation and solves~\eqref{eq:linear-functional}, due to nonlinear loss function. Instead, the density function of parameters follows a \textit{nonlinear} Fokker-Planck equation (Eq.~\eqref{eq:nonlinear_FP_equation}), the convergence rate of which is more difficult to establish.  
\cite{hu2019mean,jabir2019mean} obtained convergence rate of the mean field Langevin dynamics under sufficiently strong regularization. 
Exponential convergence of related dynamics (e.g., the underdamped variant and other McKean-Vlasov equations) in continuous time have been shown under various settings \citep{monmarche2017long,guillin2019uniform,kazeykina2020ergodicity,guillin2021kinetic}, based on hypocoercivity \citep{villani2009hypocoercivity} or coupling techniques \citep{eberle2019quantitative}. 
\citet{bou2020convergence,bou2021mixing} studied the discrete time convergence of Hamiltonian Monte Carlo with interaction potential. 
  
Our work builds upon \citet{nitanda2020particle}, which employs the Langevin algorithm to solve the inner loop of a \textit{dual averaging} method in the space of measures.  Importantly, under a uniform log-Sobolev inequality on the ``linearized'' objective, (sublinear) global convergence rate in minimizing an entropy-regularized nonlinear functional can be proved by adapting finite-dimensional convex optimization theory. 
Based on similar ideas combined with the primal-dual formulation of empirical risk minimization, a \textit{stochastic dual coordinate ascent} alternative has been developed in \citet{oko2022psdca} to achieve linear convergence in discrete time. 
A few recent works also considered the adaptation of classical convex optimization algorithms into the space of measures, such as the Mirror descent method \citep{ying2020mirror}, the Frank-Wolfe method \citep{kent2021frank}, and the Bregman proximal gradient method \citep{chizat2021convergence}. 

Concurrent and independent to our work, \citet{chizat2022mean} analyzed the mean field Langevin dynamics also using properties of the proximal Gibbs distribution $p_q$; while both works build upon the same observation, \citet{chizat2022mean} focused on the continuous-time convergence rate and studied the annealed dynamics, whereas we establish discrete-time guarantees for the noisy gradient descent algorithm and present a primal-dual viewpoint.
  
\vspace{-0.5mm}  
\subsection{Notations} 
\vspace{-0.25mm}
$\|\cdot\|_2$ denotes the Euclidean norm.
Given a density function $q(\theta)$ on $\bR^d$, we write the expectation w.r.t.~$q(\theta) \rd \theta$ as $\bE_{\theta \sim q}[\cdot]$ or simply $\bE_{q}[\cdot]$, $\bE_\theta[\cdot]$ when the random variable and distribution are obvious from the context; e.g.~for a function $f: \bR^d \rightarrow \bR$, we write $\bE_q[f] = \int f(\theta) q(\theta)\rd\theta$ when $f$ is integrable. 
$\KL$ stands for the Kullback-Leibler divergence:
$\KL(q\|q') \defeq \int q(\theta) \log \left( \frac{q(\theta)}{q'(\theta)}\right)\rd\theta$. 
Let $\cP$ be the space of probability density functions with respect to $\rd \theta$ on $\bR^d$ such that the entropy and second moment are well-defined. 

%----------------------------------------------------------------------------------
% Preliminaries
%----------------------------------------------------------------------------------
\section{PRELIMINARIES}
The mean field Langevin dynamics is the target of our convergence analysis. In this section, we introduce this dynamics as well as the associated optimization problem. We also outline one major application of our convergence analysis, which is the optimization of mean field neural networks.
%----------------------------------------------------------------------------------
% Problem Setup
%----------------------------------------------------------------------------------
\subsection{Problem Setup}
Let $F:\cP \to \bR$ be a differentiable convex functional. 
That is, we suppose there is a functional $\frac{\delta F}{\delta q}:~\cP \times \bR^d \ni (q,\theta) \mapsto \frac{\delta F}{\delta q}(q)(\theta) \in \bR$ such that for any $q, q' \in \cP$,
\[ \left.\frac{\rd F (q+\epsilon (q'-q))}{\rd\epsilon} \right|_{\epsilon=0} 
= \int  \frac{\delta F}{\delta q}(q)(\theta) (q'-q)(\theta) \rd\theta \]
and $F$ satisfies the convexity condition:
\begin{equation}\label{eq:convexity}
F(q') \geq F(q) + \int \frac{\delta F}{\delta q}(q)(\theta) (q'-q)(\theta) \rd \theta.      
\end{equation}

We consider the minimization of an entropy regularized nonlinear functional:
\begin{equation}\label{prob:org}
    \min_{q \in \cP} 
    \left\{ 
    \cL(q) \defeq F(q) + \lambda \bE_{q} [ \log q ]
    \right\}.
\end{equation}
For $q \in \cP$, we next define an associated Gibbs distribution which plays a key role in our analysis: the {\it proximal Gibbs distribution} around $q$ in $\cP$ (see Proposition \ref{prop:optimization_gap_for_strong_convex_problems}).
\begin{definition}[Proximal Gibbs distribution]
    We define $p_q(\theta)$ to be the Gibbs distribution with potential function $-\lambda^{-1} \delta F(q)/\delta q$, i.e., 
    \begin{equation}
        p_q(\theta) = \frac{\exp\left( - \frac{1}{\lambda} \frac{\delta F(q)}{\delta q}(\theta) \right)}{Z(q)},
    \end{equation}
    where $Z(q)$ is the normalization constant.
\end{definition}

In this study, we provide basic convergence results for the general form of the problem (\ref{prob:org}) under the following assumptions, which will be specialized and verified for mean field neural network introduced in the sequel.  
\begin{assumption}\label{assumption:potential}
    For a functional $F:\cP \to \bR$ and any $q \in \cP$, assume the functional derivative $\frac{\delta F}{\delta q}(q)(\theta)$ exists and is smooth in $\theta$. Moreover, assume $|Z(q)| < \infty$, $\frac{\delta F}{\delta q}(q)(\theta)=O(1+\|\theta\|_2^2)$ uniformly over $\cP$, and $F$ is a convex functional, that is, (\ref{eq:convexity}) holds for any $q, q'\in \cP$.
\end{assumption}
\begin{assumption}[Log-Sobolev inequality] \label{assumption:ls_ineq}
    Suppose there exists a constant $\alpha>0$ such that for any $q \in \cP$, the probability distribution $p_q(\theta)\rd \theta$ satisfies log-Sobolev inequality with constant $\alpha$, that is, for any smooth function $g: \bR^d \to \bR$, we have
    \[ \bE_{p_q}[g^2 \log g^2] - \bE_{p_q}[g^2]\log\bE_{p_q}[g^2] \leq \frac{2}{\alpha}\bE_{p_q}[\|\nabla g\|_2^2]. \]
\end{assumption}
Under Assumption \ref{assumption:ls_ineq}, by setting $g = \sqrt{q/p_q}$, we get
\begin{equation}\label{eq:ls_ineq}
    \KL( q \| p_q) \leq \frac{1}{2\alpha} \bE_{q}\left[ \left\| \nabla \log \frac{q}{p_q} \right\|_2^2 \right].    
\end{equation}

%----------------------------------------------------------------------------------
% Optimization
%----------------------------------------------------------------------------------
\subsection{Optimization Dynamics}
To solve the problem \eqref{prob:org}, we consider the following (continuous-time) {\it mean field Langevin dynamics}:
\begin{equation}\label{eq:MF-Langevin-Dynamics}
    \rd\theta_t = - \nabla \frac{\delta F}{\delta q}(q_t)(\theta_t) \rd t + \sqrt{2\lambda}\rd W_t,
\end{equation}
where $\theta_t \sim q_t(\theta)\rd \theta$ and $\{W_t\}_{t\geq 0}$ is the Brownian motion in $\bR^d$ with $W_0=0$.
Here, the gradient $\nabla$ of the functional derivative $\frac{\delta F(q)}{\delta q}(\theta)$ is applied with respect to $\theta$.
It is known that the distribution of $\theta_t$ following the dynamics (\ref{eq:MF-Langevin-Dynamics}) solves nonlinear Fokker–Planck equation:
\begin{equation}\label{eq:nonlinear_FP_equation}
    \frac{\partial q_t}{\partial t}
    = \nabla \cdot \left( q_t \nabla \frac{\delta F}{\delta q}(q_t) \right) 
    + \lambda \Delta q_t.
\end{equation}
We here reformulate the equation (\ref{eq:nonlinear_FP_equation}) as follows:
\begin{align} \label{eq:nonlinear_FP_equation_2}
    \frac{\partial q_t}{\partial t}
    %&= \nabla \cdot \left( q_t \nabla \frac{\delta F}{\delta q}(q_t) 
    %+ \lambda \nabla q_t \right) \notag \\
    &= \lambda \nabla \cdot \left( q_t\nabla \log \exp\left( \frac{1}{\lambda} \frac{\delta F}{\delta q}(q_t) \right)
    + q_t \nabla \log q_t \right) \notag \\
    &= \lambda \nabla \cdot \left( q_t \nabla \log \frac{q_t}{p_{q_t}}\right).
\end{align}
As we will see later, this formulation involving $p_{q_t}$ will be more useful in the convergence analysis.

Moreover, we also consider the standard discretization of the above dynamics with step size $\eta>0$:
\begin{equation}\label{eq:noisy-GD}
    \theta^{(k+1)} = \theta^{(k)} - \eta \nabla \frac{\delta F}{\delta q}(q^{(k)})(\theta^{(k)}) + \sqrt{2\lambda \eta}\xi^{(k)},
\end{equation}
where $\theta^{(k)}$ is a random variable following $q^{(k)}(\theta)\rd\theta$ and $\xi^{(k)} \sim \cN(0,I_d)$.
We present convergence rate analysis for both the continuous- and discrete-time algorithm. 

\subsection{Mean Field Neural Networks}\label{subsec:mfnn}
The main application of our theory is to provide quantitative convergence guarantees for risk minimization problems with the mean field neural network. Before formally defining the mean field limit, we first introduce the finite dimensional counterpart.
\paragraph{Finite dimensional case.}
Let $\cX$ be the data space and $h_\theta: \cX \rightarrow \bR$ be a component of neural network, which corresponds to a single neuron with trainable parameter $\theta \in \bR^d$.
Then, an $M$-neuron network $h_\Theta$ (where $\Theta=\{\theta_r\}_{r=1}^M$) is defined as the average of these components as in \eqref{eq:nn}.
Let $\ell(z,y): \bR \times \bR \to \bR$ be a smooth and convex loss function in $z$, such as the logistic loss and squared loss, and $\rho$ be the empirical or true data distribution. 
Then, the regularized risk of $h_\Theta$ is defined as 
\begin{equation}\label{eq:risk_with_finite-model}
    \bE_{(X,Y) \sim \rho}[ \ell(h_\Theta(X),Y)] + \frac{\lambda'}{M} \sum_{r=1}^M r(\theta_r),
\end{equation}
where $r(\theta)$ is a convex regularizer such as $\ell_2$-penalty $r(\theta)=\|\theta\|_2^2$, and $\lambda'>0$ is regularization strength.
Note that this formulation includes both empirical and expected risk minimization problems.
% by setting $\rho$ as the empirical and true distributions.
To optimize the objective (\ref{eq:risk_with_finite-model}), we perform the gradient descent update with step size $\eta M >0$:
\begin{align}\label{eq:gradient_descent}
    &g_r^{(k)} = \bE[\partial_{z}\ell(h_{\Theta^{(k)}}(X),Y) \partial_{\theta_r}h_{\theta_r^{(k)}}(X)] + \lambda' \partial_{\theta_r} r(\theta_r^{(k)}), \notag\\
    &\theta_r^{(k+1)} = \theta_r^{(k)} - \eta g_r^{(k)}.
\end{align}

\paragraph{Mean field limit.}
We now take the limit $M \to \infty$ and suppose $\theta_r$ follows a probability distribution $q(\theta)\rd\theta$.
Then, $h_\Theta(x)$ converges to the mean field limit $h_q(x) = \bE_{\theta\sim q}[h_{\theta}(x)]$ in which the density function $q$ is recognized as the parameter, and the objective \eqref{eq:risk_with_finite-model} converges to the following convex functional,
\begin{equation} 
\label{eq:mean-field-nn}
    F(q) = \bE_{(X,Y)}[ \ell(h_q(X),Y)] + \lambda' \bE_{\theta \sim q}[ r(\theta) ].
\end{equation}
In this case, the functional derivative $\frac{\delta F(q)}{\delta q}$ is 
\[ \frac{\delta F(q)}{\delta q}(\theta) = \bE_{(X,Y)}[ \partial_z\ell(h_q(X),Y) h_\theta(X)] + \lambda' r(\theta). \]
By noticing that $\nabla \frac{\delta F(q)}{\delta q}(\theta)$ is the mean field limit ($M\to \infty$) of the step $g_r^{(k)}$ for the finite-dimensional model, we see that the discrete dynamics (\ref{eq:noisy-GD}) is the noisy variant of the mean field limit of gradient descent (\ref{eq:gradient_descent}). 
This motivates us to study mean field Langevin dynamics (\ref{eq:MF-Langevin-Dynamics}) and its discretization (\ref{eq:noisy-GD}).

\paragraph{Verification of assumptions.}
Under smoothness and boundedness conditions on $h_\theta$, and also smooth convex loss function $\ell$ with $\ell_2$-penalty term $r(\theta)=\|\theta\|_2^2$, one can easily verify that objective of mean field neural network satisfies Assumption \ref{assumption:potential}. 
The convexity of $F$ immediately follows by taking the expectation $\bE_{(X,Y)}$ of the following inequality:
\begin{align*}
 &\ell(h_q(X),Y) + \int \partial_z \ell(h_q(X),Y) h_\theta(X)( q' - q)(\theta) \rd \theta \\
 &=\ell(h_q(X),Y) + \partial_z \ell(h_q(X),Y)(h_{q'}(X) - h_q(X)) \\
 &\leq \ell(h_{q'}(X),Y).
\end{align*}

Moreover, with $r(\theta)=\|\theta\|_2^2$, the uniform log-Sobolev inequality (Assumption \ref{assumption:ls_ineq}) with constant $\alpha=\frac{2\lambda'}{\lambda \exp(O(\lambda^{-1}))}$ can be verified via a standard application of the LSI perturbation lemma \citep{holley1987logarithmic} (see Appendix~\ref{app:LSI} for details), since $p_q$ is proportional to the Gibbs distribution specified as:  
\begin{equation}\label{eq:gibbs_dist}
    %p_q(\theta) \propto 
    \!\exp\left( -\frac{1}{\lambda} \bE_{(X,Y)}[ \partial_z\ell(h_q(X),Y) h_\theta(X)] 
    - \frac{\lambda'}{\lambda} r(\theta) \right)\!.\!\!\!
\end{equation}

%----------------------------------------------------------------------------------
% Convergence Analysis
%----------------------------------------------------------------------------------
\section{CONVERGENCE ANALYSIS}\label{sec:convergence_analysis}
In this section, we present the convergence rate analysis of mean field Langevin dynamics in both continuous- and discrete-time settings. 
We remark that our proof strategy can be seen as a combination and extension of $(i)$ convex analysis parallel to the finite-dimensional optimization setting, and $(ii)$ convergence analysis for the linear Fokker-Planck equation (e.g., \cite{vempala2019rapid}). 
As we will see, the proximal Gibbs distribution $p_q$ plays an important role in connecting these different techniques.
%----------------------------------------------------------------------------------
% Optimality Condition
%----------------------------------------------------------------------------------
\subsection{Basic Properties and Convexity}  

We first present a basic but important result that characterizes the role of $p_q$ in the convergence analysis. 
Note that the functional derivative of the negative entropy $\bE_q[\log q]$ in the density function $q$ is $\log q$\footnote{ We ignore a constant in the functional derivative because this derivative is applied to function $g: \bR^d \to \bR$ that satisfies $\int g(\theta) \rd \theta = 0$ or is differentiated by the gradient $\nabla$.}; therefore, the optimality condition of the problem (\ref{prob:org}) is $\frac{\delta \cL}{\delta q}(q) = \frac{\delta F}{\delta q}(q) + \lambda \log q = 0$. This is to say, the optimal probability density function $q_*$ satisfies 
\begin{equation}\label{eq:opt-condition}
    q_*(\theta) = p_{q_*}(\theta) \propto \exp\left( - \frac{1}{\lambda} \frac{\delta F(q_*)}{\delta q}(\theta)\right).
\end{equation} 
This fact was shown in \cite{hu2019mean}. 
Hence, we may interpret the divergence between probability density functions $q$ and $p_q$ as an \textit{optimization gap}. Indeed, this intuition is confirmed by the following proposition, which can be established using standard convex analysis.
In particular, the proof relies on the fact that the negative entropy acts as a strongly convex function with respect to $\KL$-divergence. 

\begin{proposition}\label{prop:optimization_gap_for_strong_convex_problems}~
Under Assumption \ref{assumption:potential}, we have the three following statements,
\begin{enumerate}[topsep=0.2mm] 
\setlength{\leftskip}{-3mm} 
    \item In the sense of functional on the space of probability density functions, the following equality holds.
    \begin{equation*} %\label{eq:functional_derivative_objective}
        \frac{\delta \cL}{\delta q}(q) = \lambda\frac{\delta}{\delta q'} \KL(q' \| p_q)|_{q'=q} =  \lambda \log \frac{q}{p_q}. 
    \end{equation*} 
    In other words, for any $g = p-p'$ $(p, p'\in \cP)$, 
    \begin{equation*}
        \int \frac{\delta \cL}{\delta q}(q)(\theta) g(\theta) \rd \theta 
        = \int \lambda \log \left( \frac{q}{p_q}(\theta) \right) g(\theta) \rd \theta.    
    \end{equation*}
    \item For any probability distributions $q, q' \in \cP$, we have
    \begin{equation}\label{prob:proximal-point}
        \hspace{-2mm}\cL(q) + \int \frac{\delta \cL}{\delta q}(q)(\theta)(q'-q)(\theta)\rd\theta + \lambda \KL(q'\|q) \leq \cL(q').
    \end{equation}
    Moreover, $p_q$ associated with $q \in \cP$ is a minimizer of the left hand side of this inequality in $q' \in \cP$.
    %Moreover, $\cL(p_q) \leq \cL(q)$ holds for any $q \in \cP$.
    \item Let $q_*$ be an optimal solution of (\ref{prob:org}).
    Then, for any $q \in \cP$, we get
    \begin{equation*}
        \lambda \KL( q \| p_q) \geq \cL(q) - \cL(q_*) \geq \lambda \KL( q \| q_*).
    \end{equation*}
\end{enumerate}

\end{proposition}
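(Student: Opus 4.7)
The overall plan is to decompose $\cL = F + \lambda\int q\log q\,\rd\theta$ into the convex functional $F$ plus the negative entropy, recognize that negative entropy is $1$-strongly convex with respect to $\KL$, and translate everything into statements about $p_q$ via the identity $\frac{\delta F}{\delta q}(q) = -\lambda\log p_q - \lambda\log Z(q)$ that is immediate from the definition of the proximal Gibbs distribution.

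For Part 1, I would compute the first variation directly. The derivative of $\int q\log q\,\rd\theta$ is $\log q + 1$; the constant $+1$ may be ignored because the admissible test functions $g = p-p'$ have zero integral. Adding Assumption~\ref{assumption:potential}'s derivative $\frac{\delta F}{\delta q}(q)(\theta)$ and substituting the above identity for it gives $\frac{\delta\cL}{\delta q}(q) = \lambda\log(q/p_q)$, modulo the $\theta$-independent constant $-\lambda\log Z(q)$. A parallel calculation for $\KL(q'\|p_q) = \int q'\log q' - \int q'\log p_q$ yields derivative $\log(q'/p_q) + 1$, and evaluating at $q' = q$ and multiplying by $\lambda$ matches the above.

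For Part 2, I would add the $F$-convexity hypothesis (\ref{eq:convexity}) to the elementary Bregman identity for negative entropy, $\int q'\log q' - \int q\log q - \int \log q\,(q'-q)\,\rd\theta = \KL(q'\|q)$, which is just the definition of $\KL$ rearranged. Substituting the Part 1 formula produces the three-point inequality. For the minimizer claim, expanding the $q'$-dependent part of the left-hand side and collapsing the two logarithmic integrals into a single $\KL$ gives
\begin{equation*}
\lambda\int\log(q/p_q)(q'-q)\,\rd\theta + \lambda\KL(q'\|q) = \lambda\KL(q'\|p_q) - \lambda\KL(q\|p_q),
\end{equation*}
which is plainly minimized at $q' = p_q$ by non-negativity of $\KL$.

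Part 3 is then two quick corollaries. The lower bound follows from Part 2 with the pair $(q_*,q)$ playing the role of $(q,q')$: by Part 1 and the optimality condition (\ref{eq:opt-condition}), $\frac{\delta\cL}{\delta q}(q_*) = \lambda\log(q_*/p_{q_*}) \equiv 0$ up to a constant which integrates to zero against $q-q_*$, so the linear term vanishes and we obtain $\cL(q_*) + \lambda\KL(q\|q_*) \leq \cL(q)$. For the upper bound, the Part 2 computation above shows that the minimum over $q'$ of the left-hand side of (\ref{prob:proximal-point}) equals $\cL(q) - \lambda\KL(q\|p_q)$; since (\ref{prob:proximal-point}) makes this minimum a lower bound for $\cL(q')$ simultaneously for every $q'$, plugging in $q' = q_*$ yields $\cL(q) - \lambda\KL(q\|p_q) \leq \cL(q_*)$. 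I do not anticipate a genuine obstacle: the whole argument is textbook convex analysis once $p_q$ is identified as the proximal point. The only subtle bookkeeping concerns the additive constants $-\lambda\log Z(q)$ and $+1$ from the functional derivatives, which must be tracked so that they vanish against the zero-mean perturbations $g = q'-q$ and against the integration of entropy.
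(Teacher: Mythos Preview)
Your proposal is correct and follows essentially the same route as the paper: both decompose $\cL$ into the convex $F$ plus negative entropy, use the Bregman identity for entropy to obtain the three-point inequality, and then read off Part~3 from Part~2 together with the optimality condition. Your handling of the minimizer claim in Part~2 is in fact a little cleaner than the paper's: the paper verifies $q'=p_q$ by writing down the first-order condition $\frac{\delta\cL}{\delta q}(q) + \lambda(\log q' - \log q) = 0$ and solving, whereas you collapse the $q'$-dependent terms into the closed form $\lambda\KL(q'\|p_q) - \lambda\KL(q\|p_q)$, from which both the minimizer and the exact minimum value (needed for the upper bound in Part~3) are immediate. The paper instead recomputes that minimum value by plugging $q'=p_q$ back in and simplifying.
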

\begin{proof}
(i) We start with the first statement.
\begin{align*}
    \frac{\delta \cL}{\delta q}(q) 
    &= \frac{\delta F}{\delta q}(q) + \lambda \log q \\
    &= - \lambda \log \exp\left( - \frac{1}{\lambda} \frac{\delta F}{\delta q}(q) \right) + \lambda \log q \\
    &= \lambda ( \log q - \log p_q ) - \lambda \log Z(p_q),
\end{align*}
where $Z(p_q)$ is a normalization constant of $p_q$.
Moreover, we can see $\frac{\delta}{\delta q'} \KL(q' \| p_q)|_{q'=q} = \log q - \log p_q$. 

(ii) From direct computation, for any $q, q' \in \cP$, 
\begin{align}\label{eq:entropy-difference}
    \bE_{q'}[\log(q')] 
    &= \bE_{q}[\log(q)] + \KL(q^{\prime}\|q) \notag\\
    &+ \int \frac{\delta}{\delta q}\bE_{q}[\log(q)](\theta) (q'-q)(\theta) \rd\theta .
\end{align}

By the convexity of $F$ and (\ref{eq:entropy-difference}), we get 
\begin{align*}
    \cL(q') 
    &= F(q') + \lambda \bE_{q'} [ \log q'] \\
    &\geq F(q) + \int \frac{\delta F}{\delta q}(q)(\theta) ( q' -q )(\theta) \rd \theta 
    + \lambda \bE_{q'} [ \log q'] \\
    &= \cL(q) + \int \frac{\delta \cL}{\delta q}(q)(\theta) ( q' -q )(\theta) \rd \theta 
    + \lambda \KL(q^{\prime}\|q).
    %&= \cL(q) + \lambda \int (\log q(\theta) - \log p_q(\theta)) ( q' -q )(\theta) \rd \theta 
    %+ \lambda \KL(q^{\prime}\|q).
\end{align*}

In addition, by taking the functional derivative of the left hand side of (\ref{prob:proximal-point}) in $q'$, we obtain the optimality condition of this functional as follows:
\[ 0 = \frac{\delta \cL}{\delta q} + \lambda( \log q' - \log q ) = -\lambda \log p_q + \lambda \log q'. \]
Therefore, $q' = p_q$ is a minimizer of the left hand side of (\ref{prob:proximal-point}) as desired.

(iii) For the last statement, observe that minimizing both sides of (\ref{prob:proximal-point}) over $q' \in \cP$ yields 
\begin{align*}
    \cL(q_*) 
    &\geq  \cL(q) + \int \frac{\delta \cL}{\delta q}(q)(\theta) ( p_q -q )(\theta) \rd \theta 
    + \lambda \KL(p_q\|q) \\
    &= \cL(q) + \lambda \int \log \frac{q}{p_q}(\theta) ( p_q -q )(\theta) \rd \theta + \lambda \KL(p_q\|q) \\
    &= \cL(q) - \lambda \KL( q \| p_q).
\end{align*}
Moreover, by (\ref{prob:proximal-point}) with $q = q_*$ and the optimality condition $\frac{\delta \cL}{\delta q}(q_*) = 0$, we get $\cL(q) - \cL(q_*) \geq \lambda \KL( q \| q_*)$.
This finishes the proof.
\end{proof}
We remark that the inequality (\ref{prob:proximal-point}) indicates that the functional $\cL$ satisfies an analog of strong convexity with the proximal functional $\KL(q'\|q)$, and in the third statement in Proposition \ref{prop:optimization_gap_for_strong_convex_problems}, this convexity plays a similar role as in finite dimensional convex analysis. In particular, inequalities $\lambda \KL( q \| p_q) \geq \cL(q) - \cL(q_*)$ and $\cL(q) - \cL(q_*) \geq \lambda \KL( q \| q_*)$ can be recognized as the counterparts of the Polyak-Łojasiewicz and quadratic growth inequalities, respectively (for details of these conditions see \cite{charles2018stability}). 
Following this analogy, $\KL( q \| p_q)$ and $\KL( q \| q_*)$ act as the squared norm of gradient at $q$ and squared distance between $q$ and $q_*$, respectively. 

Finally, the third statement in Proposition~\ref{prop:optimization_gap_for_strong_convex_problems} indicates that the divergence between $q$ and $p_q$ indeed measures the optimality gap, as expected from the optimality condition (\ref{eq:opt-condition}).
Furthermore, we reveal that $p_q$ can be interpreted as a \textit{proximal point} which minimizes the sum of linearization of $\cL$ and the $\KL$-divergence around $q$, and that convergence of the optimization gap implies convergence to $q_*$ in the sense of $\KL$-divergence. 

%----------------------------------------------------------------------------------
% Convergence in Continuous Time Setting
%----------------------------------------------------------------------------------
\subsection{Convergence Rate in Continuous Time}

We now introduce the convergence rate analysis by utilizing the aforementioned results. 
We first show that the mean field Langevin dynamics (\ref{eq:nonlinear_FP_equation}) converges linearly to the optimal solution of  (\ref{prob:org}) in continuous time under the log-Sobolev inequality. 
\begin{theorem}\label{theorem:convergence}
Let $\{q_t\}_{t \geq 0}$ be the evolution described by (\ref{eq:nonlinear_FP_equation}).
Under Assumption \ref{assumption:potential}, \ref{assumption:ls_ineq}, we get for $t \geq 0$,
%\[ \cL(p_{q_t}) - \cL(q_*) \leq \cL(q_t) - \cL(q_*) \leq \exp( -2 \alpha t)( \cL(q_0) - \cL(q_*) ). \]
\[ \cL(q_t) - \cL(q_*) \leq \exp( -2 \alpha \lambda t)( \cL(q_0) - \cL(q_*) ). \]
\end{theorem}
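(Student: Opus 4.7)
The plan is to use $\mathcal{L}(q_t)-\mathcal{L}(q_*)$ itself as a Lyapunov functional and show that its time derivative is bounded above by $-2\alpha\lambda$ times itself, so that Grönwall's inequality immediately gives the claimed exponential decay. The three ingredients are the reformulation \eqref{eq:nonlinear_FP_equation_2} of the Fokker--Planck equation in terms of $p_{q_t}$, the identification of the functional derivative $\delta\mathcal{L}/\delta q = \lambda\log(q/p_q)$ in Proposition~\ref{prop:optimization_gap_for_strong_convex_problems}(i), and the PL-type inequality $\lambda\,\mathrm{KL}(q\|p_q)\ge \mathcal{L}(q)-\mathcal{L}(q_*)$ from Proposition~\ref{prop:optimization_gap_for_strong_convex_problems}(iii). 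The log-Sobolev inequality is what links the Fisher-information-like quantity produced by the energy dissipation computation to the KL divergence that appears in the PL inequality.

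First I would differentiate $\mathcal{L}(q_t)$ in time using the chain rule for functionals together with Proposition~\ref{prop:optimization_gap_for_strong_convex_problems}(i):
\begin{equation*}
\frac{\mathrm{d}}{\mathrm{d}t}\mathcal{L}(q_t)
= \int \frac{\delta\mathcal{L}}{\delta q}(q_t)(\theta)\,\partial_t q_t(\theta)\,\mathrm{d}\theta
= \lambda\int \log\frac{q_t}{p_{q_t}}(\theta)\,\partial_t q_t(\theta)\,\mathrm{d}\theta.
\end{equation*}
Plugging in the divergence form \eqref{eq:nonlinear_FP_equation_2} of the dynamics and integrating by parts (which is the place where regularity/decay of $q_t$ at infinity is implicitly used) yields the dissipation identity
\begin{equation*}
\frac{\mathrm{d}}{\mathrm{d}t}\mathcal{L}(q_t)
= -\lambda^2\int q_t(\theta)\left\|\nabla\log\frac{q_t}{p_{q_t}}(\theta)\right\|_2^2\mathrm{d}\theta
= -\lambda^2\,\mathbb{E}_{q_t}\!\left[\left\|\nabla\log\frac{q_t}{p_{q_t}}\right\|_2^2\right].
\end{equation*}

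Next I would invoke the log-Sobolev inequality in the form \eqref{eq:ls_ineq} with $g=\sqrt{q_t/p_{q_t}}$, which bounds the right-hand side from above by $-2\alpha\lambda^2\,\mathrm{KL}(q_t\|p_{q_t})$. Finally, Proposition~\ref{prop:optimization_gap_for_strong_convex_problems}(iii) turns this KL divergence into the optimization gap: $\lambda\,\mathrm{KL}(q_t\|p_{q_t})\ge \mathcal{L}(q_t)-\mathcal{L}(q_*)$. Chaining the three inequalities gives
\begin{equation*}
\frac{\mathrm{d}}{\mathrm{d}t}\big(\mathcal{L}(q_t)-\mathcal{L}(q_*)\big)\le -2\alpha\lambda\,\big(\mathcal{L}(q_t)-\mathcal{L}(q_*)\big),
\end{equation*}
after which Grönwall's lemma yields the stated rate $\exp(-2\alpha\lambda t)$.

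The main obstacle I would expect is rigor rather than computation: justifying the interchange of time derivative and integral, and justifying the integration by parts (checking that $q_t$ has enough decay in $\theta$ and that boundary terms vanish). Under Assumption~\ref{assumption:potential}, the growth bound $\frac{\delta F}{\delta q}(q_t)(\theta)=O(1+\|\theta\|_2^2)$ makes $p_{q_t}$ essentially sub-Gaussian, so one expects $q_t$ to inherit enough tail control to make these manipulations legitimate; nevertheless this would occupy most of a fully rigorous write-up. A secondary subtlety is that $\delta\mathcal{L}/\delta q$ is only defined up to an additive constant, but the constant drops out here because $\int\partial_t q_t\,\mathrm{d}\theta=0$, so the calculation above is unambiguous.
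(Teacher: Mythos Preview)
Your proposal is correct and follows essentially the same argument as the paper: differentiate $\mathcal{L}(q_t)$ via the chain rule, use the divergence form \eqref{eq:nonlinear_FP_equation_2} together with $\nabla\frac{\delta\mathcal{L}}{\delta q}(q_t)=\lambda\nabla\log(q_t/p_{q_t})$ and integration by parts to obtain the Fisher-information dissipation $-\lambda^2\,\mathbb{E}_{q_t}[\|\nabla\log(q_t/p_{q_t})\|_2^2]$, then apply the log-Sobolev inequality \eqref{eq:ls_ineq} and the PL-type bound of Proposition~\ref{prop:optimization_gap_for_strong_convex_problems}(iii), and conclude by Gr\"onwall. Your remarks on the regularity needed for integration by parts and on the additive constant in $\delta\mathcal{L}/\delta q$ are valid side comments; the paper's proof leaves these implicit.
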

\begin{proof}
From Proposition \ref{prop:optimization_gap_for_strong_convex_problems} and  (\ref{eq:nonlinear_FP_equation_2}), we have
\begin{align*}
    \frac{\rd }{\rd t}&( \cL(q_t) - \cL(q_*) ) \\
    %&= \frac{\rd \cL}{\rd t}(q_t) \\
    &= \int \frac{\delta \cL}{\delta q}(q_t)(\theta)\frac{\partial q_t}{\partial t}(\theta) \rd \theta \\
    &= \lambda \int \frac{\delta \cL}{\delta q}(q_t)(\theta) \nabla \cdot \left( q_t(\theta) \nabla \log \frac{q_t}{p_{q_t}}(\theta)\right) \rd \theta \\
    &= - \lambda\int q_t(\theta) \nabla \frac{\delta \cL}{\delta q}(q_t)(\theta)^\top \nabla \log \frac{q_t}{p_{q_t}}(\theta) \rd \theta \\
    &= - \lambda^2\int q_t(\theta) \left\| \nabla \log \frac{q_t}{p_{q_t}}(\theta) \right\|_2^2 \rd \theta \\
    &\leq - 2\alpha \lambda^2 \KL( q_t \| p_{q_t}) \\
    &\leq - 2\alpha\lambda ( \cL(q_t) - \cL(q_*) ).
\end{align*}
The statement then follows from a straightforward application of the Grönwall’s inequality. 
\end{proof}

As a corollary, we can also show the convergence of $\KL(q_t \| p_{q_t})$ in the following sense. 
\begin{corollary}\label{corollary:convergence}
Under the same setting as Theorem \ref{theorem:convergence}, we have for $t \geq 1$,
\[ \inf_{s \in [0,t]} \KL( q_{s} \| p_{q_{s}}) \leq \frac{\exp( -2 \alpha \lambda (t-1))}{2\alpha \lambda^2}( \cL(q_0) - \cL(q_*) ). \]
\end{corollary}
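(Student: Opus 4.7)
The plan is to leverage the inequality chain established inside the proof of Theorem~\ref{theorem:convergence}, namely
\[ \frac{\rd}{\rd t}\cL(q_t) = -\lambda^2 \int q_t(\theta) \left\| \nabla \log \frac{q_t}{p_{q_t}}(\theta)\right\|_2^2 \rd \theta \leq -2\alpha\lambda^2 \KL(q_t \| p_{q_t}), \]
which already gives a direct pointwise relationship between the dissipation of $\cL$ and the quantity $\KL(q_t \| p_{q_t})$ we wish to bound.

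The core idea is to integrate this dissipation inequality over a time window of unit length, namely $[t-1, t]$ (which is well-defined since $t \geq 1$). This yields
\[ \cL(q_{t-1}) - \cL(q_t) \geq 2\alpha\lambda^2 \int_{t-1}^{t} \KL(q_s \| p_{q_s}) \rd s \geq 2\alpha\lambda^2 \inf_{s \in [t-1, t]} \KL(q_s \| p_{q_s}), \]
where the last step just bounds the integral by the infimum of the integrand (times the length of the interval, which is $1$). Rearranging and discarding the nonnegative term $-\cL(q_t) + \cL(q_*)$ on the right, I obtain
\[ \inf_{s \in [t-1,t]} \KL(q_s \| p_{q_s}) \leq \frac{\cL(q_{t-1}) - \cL(q_*)}{2\alpha\lambda^2}. \]

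To finish, I would invoke Theorem~\ref{theorem:convergence} at time $t-1$ to get $\cL(q_{t-1}) - \cL(q_*) \leq \exp(-2\alpha\lambda(t-1))(\cL(q_0) - \cL(q_*))$, and then enlarge the infimum range from $[t-1,t]$ to $[0,t]$ (which can only decrease the value). This yields exactly the stated bound. There is no real obstacle here since the heavy lifting (the dissipation identity and the log-Sobolev inequality) is already done in Theorem~\ref{theorem:convergence}; the only subtlety is recognizing that converting an exponential decay of the objective into a bound on the gradient-norm-like quantity $\KL(q_s \| p_{q_s})$ at a single time requires averaging over a nondegenerate time window, which is precisely why the statement involves $t \geq 1$ and an infimum over $s \in [0,t]$ rather than a pointwise evaluation at $s = t$.
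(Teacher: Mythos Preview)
Your proposal is correct and follows essentially the same approach as the paper: both integrate the dissipation inequality $2\alpha\lambda^2\KL(q_s\|p_{q_s}) \leq -\frac{\rd}{\rd s}(\cL(q_s)-\cL(q_*))$ over the unit interval $[t-1,t]$, bound the resulting integral below via the infimum (the paper phrases this as a mean-value-type ``there exists $s_t$''), and then invoke Theorem~\ref{theorem:convergence} at time $t-1$ before enlarging the infimum range to $[0,t]$.
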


%----------------------------------------------------------------------------------
% Convergence in Discrete Time Setting
%----------------------------------------------------------------------------------
\subsection{Convergence Rate in Discrete Time}\label{subsec:discrete-time}

For the standard Langevin dynamics (i.e., linear Fokker-Planck equation), exponential convergence in continuous time often implies the same convergence up to certain error (depending on the step size) in discrete time. 
In this section we show that the same property also holds for the mean field Langevin dynamics (\ref{eq:MF-Langevin-Dynamics}). 
We provide a convergence rate analysis of the discrete-time dynamics (\ref{eq:noisy-GD}) by adapting a version of the ``one-step interpolation'' argument presented in \cite{vempala2019rapid}. 
Since analysis of one single step of the dynamics (\ref{eq:noisy-GD}) is the key, we adopt the following notations for conciseness.
Let $\theta^{(k)} \sim q^{(k)}(\theta)\rd\theta$ be a random variable that represents the current iteration, and let $\theta^{(k+1)}_t$ be the next iterate of noisy gradient descent with step size $t>0$: %defined by
\begin{equation}\label{eq:noisy-GD-2}
    \theta^{(k+1)}_t = \theta^{(k)} - t \nabla \frac{\delta F}{\delta q}(q^{(k)})(\theta^{(k)}) + \sqrt{2\lambda t}\xi^{(k)},     
\end{equation}
where $\xi^{(k)} \sim \cN(0,I_d)$.
We denote a probability distribution of $\theta^{(k+1)}_t$ by $q^{(k+1)}_t(\theta)\rd \theta$.
Note that this step is equivalent to (\ref{eq:noisy-GD}) when $t = \eta$, that is, $\theta^{(k+1)}_\eta = \theta^{(k+1)}$ and $q^{(k+1)}_\eta = q^{(k+1)}$.
We define $\delta_{q^{(k)},t}$ as
\begin{equation*}
\bE_{(\theta^{(k)},\theta^{(k+1)}_t)} \!\left\| \nabla  \frac{\delta F}{\delta q}(q^{(k)})(\theta^{(k)}) \!-\! \nabla \frac{\delta F}{\delta q}(q^{(k+1)}_t)(\theta^{(k+1)}_t)  \right\|_2^2,
\end{equation*}
where $\bE_{(\theta^{(k)},\theta^{(k+1)}_t)}$ is an expectation in the joint distribution of $\theta^{(k)}$ and $\theta^{(k+1)}_t$. 
Note that the term $\delta_{q^{(k)},t}$ can be recognized as a discretization error which comes from the positive step size $t > 0$, and this error usually decays to zero in common settings as $t \to 0$. 
Thus, by carefully incorporating this error into the proof of Theorem \ref{theorem:convergence}, we can show the convergence of the discrete-time (\ref{eq:noisy-GD}) up to a certain error in proportion to a constant $\delta_\eta$ depending on $\eta$. The complete proof is deferred to Appendix \ref{subsec:proof_discrete-time}.

\begin{theorem}\label{theorem:discrete-time}
Let $\{ \theta^{(k)}\}_{k=0}^\infty$ be the iterations of random variables generated by the discrete-time dynamics (\ref{eq:noisy-GD}) with the step size $\eta$ and $\{q^{(k)}\}_{k=0}^\infty$ be the corresponding probability distributions.
Suppose Assumption \ref{assumption:potential}, \ref{assumption:ls_ineq} hold and there exists a constant $\delta_\eta$ such that $\delta_{q^{(k)},t} \leq \delta_\eta$ for any $0 <  t \leq \eta$ and non-negative integer $k$.
Then, it follows that 
\begin{align*}
\cL&(q^{(k)}) - \cL(q_*)  
\leq 
\\ &\frac{ \delta_{\eta}}{2 \alpha \lambda} 
+ \exp( -\alpha\lambda\eta k )\left( \cL(q^{(0)}) - \cL(q_*) \right). %- \frac{ \delta_{\eta}}{2 \alpha \lambda} \right). 
\end{align*}
\end{theorem}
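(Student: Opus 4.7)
The plan is to adapt the continuous-time argument of Theorem~\ref{theorem:convergence} using the ``one-step interpolation'' technique of \cite{vempala2019rapid}. Fix an iterate index $k$ and view (\ref{eq:noisy-GD-2}) as the SDE
\[ d\theta^{(k+1)}_t = -\nabla \tfrac{\delta F}{\delta q}(q^{(k)})(\theta^{(k)})\, dt + \sqrt{2\lambda}\, dW_t \]
with \emph{frozen} initial drift, started from $\theta^{(k)}\sim q^{(k)}$. The marginal $q^{(k+1)}_t$ then obeys the \emph{linear} Fokker--Planck equation
\[ \frac{\partial q^{(k+1)}_t}{\partial t} = \nabla\!\cdot\!\bigl(q^{(k+1)}_t\, b_t\bigr) + \lambda \Delta q^{(k+1)}_t, \]
with drift given by the conditional mean $b_t(\theta) = \bE\bigl[\nabla \tfrac{\delta F}{\delta q}(q^{(k)})(\theta^{(k)}) \,\big|\, \theta^{(k+1)}_t = \theta\bigr]$.

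Differentiating $\cL(q^{(k+1)}_t)$ in $t$ and integrating by parts exactly as in the proof of Theorem~\ref{theorem:convergence}, and using Proposition~\ref{prop:optimization_gap_for_strong_convex_problems}(i) to rewrite $\nabla \tfrac{\delta\cL}{\delta q}(q^{(k+1)}_t) = \lambda \nabla \log\bigl(q^{(k+1)}_t/p_{q^{(k+1)}_t}\bigr)$, produces the same squared-gradient term as in the continuous-time proof plus an extra cross term comparing $b_t$ to the ``ideal'' drift $b^*_t(\theta) := \nabla \tfrac{\delta F}{\delta q}(q^{(k+1)}_t)(\theta)$. A Young inequality lets me absorb half of the squared-gradient term and leaves $\tfrac{1}{2}\bE_{q^{(k+1)}_t}\|b_t - b^*_t\|_2^2$. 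Since $b_t$ is by construction a conditional expectation and $b^*_t(\theta^{(k+1)}_t)$ is measurable with respect to $\theta^{(k+1)}_t$, Jensen's inequality dominates this error by $\delta_{q^{(k)},t} \leq \delta_\eta$. Applying the log-Sobolev inequality (\ref{eq:ls_ineq}) and then Proposition~\ref{prop:optimization_gap_for_strong_convex_problems}(iii) to convert the remaining Fisher-information quantity first to $\KL(q^{(k+1)}_t \| p_{q^{(k+1)}_t})$ and then to $\cL(q^{(k+1)}_t) - \cL(q_*)$ yields, for $t\in[0,\eta]$,
\[ \tfrac{d}{dt}\bigl(\cL(q^{(k+1)}_t) - \cL(q_*)\bigr) \leq -\alpha\lambda\bigl(\cL(q^{(k+1)}_t) - \cL(q_*)\bigr) + \tfrac{\delta_\eta}{2}. \]

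Integrating this linear ODE with Gr\"onwall gives the per-step contraction $\cL(q^{(k+1)}) - \cL(q_*) \leq e^{-\alpha\lambda\eta}(\cL(q^{(k)}) - \cL(q_*)) + \tfrac{\delta_\eta}{2\alpha\lambda}(1-e^{-\alpha\lambda\eta})$; unrolling the recursion over $k$ and collapsing the resulting geometric series exactly recovers the stated bound. The main obstacle is the cross-term analysis: I must correctly identify $b_t$ as the conditional expectation that governs the Fokker--Planck of the frozen-drift interpolation, and then relate $\bE_{q^{(k+1)}_t}\|b_t - b^*_t\|_2^2$ to the hypothesis $\delta_{q^{(k)},t}$ via Jensen so the a priori bound $\delta_\eta$ can be invoked uniformly. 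Beyond this step the argument is a mechanical transcription of the continuous-time proof, with the Polyak--{\L}ojasiewicz-type inequality of Proposition~\ref{prop:optimization_gap_for_strong_convex_problems}(iii) playing the same closing role as in Theorem~\ref{theorem:convergence}.
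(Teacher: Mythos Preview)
Your proposal is correct and follows essentially the same route as the paper's proof: the frozen-drift interpolation SDE, the Fokker--Planck equation for the marginal with drift given by the conditional mean, the add-and-subtract of the ideal drift $\nabla\tfrac{\delta F}{\delta q}(q^{(k+1)}_t)$, Young's inequality to split off half of the Fisher-information term, and then LSI plus Proposition~\ref{prop:optimization_gap_for_strong_convex_problems}(iii) before Gr\"onwall. The only cosmetic difference is that the paper applies Young's inequality directly inside the joint integral over $(\theta_0,\theta_t)$, whereas you first reduce to $\bE_{q^{(k+1)}_t}\|b_t-b^*_t\|_2^2$ and then invoke Jensen to pass to the joint expectation $\delta_{q^{(k)},t}$; these yield the same bound.
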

\begin{proof}[Proof sketch]
We first present the one-step analysis for iteration ($\ref{eq:noisy-GD-2}$) with step size $\eta$.
Consider the stochastic differential equation:
\begin{equation}\label{eq:noisy-GD_dynamics_copy}
    \rd\theta_t = - \nabla \frac{\delta F}{\delta q}(q^{(k)})(\theta_0)\rd t + \sqrt{2\lambda}\rd W_t,
\end{equation}
where $\theta_0 = \theta^{(k)}$ and $W_t$ is the Brownian motion in $\bR^d$ with $W_0 = 0$.
Then, (\ref{eq:noisy-GD-2}) is the solution of this equation at time $t$.
We denote by $q_{0t}(\theta_0,\theta_t)$ the joint probability distribution of $(\theta_0,\theta_t)$ for time $t$, 
and by $q_{t|0},~q_{0|t}$ and $q_0,~q_t$ conditional and marginal distributions. That is, $q_0 = q^{(k)}$, $q_t = q^{(k+1)}_t$ (i.e., $\theta_t \disteq \theta^{(k+1)}_t$), and
%This can be written by the product of the conditionals and marginals distributions as follows:
\[ q_{0t}(\theta_0,\theta_t) = q_0(\theta_0) q_{t|0}(\theta_t | \theta_0) = q_t(\theta_t) q_{0|t}(\theta_0 | \theta_t). \]
The continuity equation of $q_{t|0}$ conditioning on $\theta_0$ can be described as follows (see Section 7 of \cite{vempala2019rapid} for details):
\begin{align*}
    \frac{\partial q_{t|0}(\theta_t|\theta_0) }{\partial t} 
    =& \nabla \cdot \left( q_{t|0}(\theta_t|\theta_0) \nabla \frac{\delta F}{\delta q}(q_0)(\theta_0) \right) \\
    &+ \lambda \Delta q_{t|0}(\theta_t|\theta_0).
\end{align*}
Therefore, we obtain the following description of $q_t$:
\begin{align}
    \frac{\partial q_{t}(\theta_t) }{\partial t}
    &=\lambda \nabla \cdot \left( q_t(\theta_t) \nabla \log\frac{q_t}{p_{q_t}}(\theta_t) \right) \notag\\ 
    &+\nabla \cdot \biggl\{ q_t(\theta_t) \biggl( \bE_{\theta_0|\theta_t}\left[ \nabla \frac{\delta F}{\delta q}(q_0)(\theta_0) \middle| \theta_t \right] \notag \\
    &- \nabla \frac{\delta F}{\delta q}(q_t)(\theta_t) \biggr) \biggr\}, \label{eq:theorem:error-FP_copy}
\end{align}
where $p_{q_t}(\cdot) \propto \exp\left( -\frac{1}{\lambda} \nabla \frac{\delta F}{\delta q}(q_t)(\cdot) \right)$. 
By Assumption \ref{assumption:ls_ineq} and (\ref{eq:theorem:error-FP_copy}), for $0 \leq t \leq \eta$, we have
\begin{align*}
    \frac{\rd \cL}{\rd t}(q_t) 
    &= \int \frac{\delta \cL}{\delta q}(q_t)(\theta)\frac{\partial q_t}{\partial t}(\theta) \rd \theta \\
    &\leq - \frac{\lambda^2}{2} \int q_t(\theta) \left\| \nabla \log \frac{q_t}{p_{q_t}}(\theta) \right\|_2^2 \rd \theta \\
    &~~ + \frac{1}{2} \bE_{(\theta_0,\theta)\sim q_{0t}} \! \left\| \nabla \frac{\delta F}{\delta q}(q_0)(\theta_0) 
    \! - \! \nabla \frac{\delta F}{\delta q}(q_t)(\theta) \right\|_2^2 \\
    %&\leq - \alpha \lambda^2 \KL( q_t \| p_{q_t}) + \frac{1}{2} \delta_{q_0,t} \\
    &\leq - \alpha\lambda ( \cL(q_t) - \cL(q_*) ) + \frac{1}{2} \delta_{\eta},
\end{align*}
where we used $(\theta_0,\theta_t) \disteq (\theta^{(k)},\theta^{(k+1)}_t)$ to bound the last expectation by $\delta_{q^{(k)},t} \leq \delta_\eta$. 
Noting $q_\eta = q^{(k+1)}$ and $q_0 = q^{(k)}$, Grönwall's inequality yields
\begin{align*}
\cL &(q^{(k+1)}) - \cL(q_*) - \frac{ \delta_{\eta}}{2 \alpha \lambda} \\
&\leq \exp( -\alpha\lambda\eta )\left( \cL(q^{(k)}) - \cL(q_*) - \frac{ \delta_{\eta}}{2 \alpha \lambda} \right). 
\end{align*}

This reduction holds at every iteration of (\ref{eq:noisy-GD_dynamics_copy}), which concludes the proof. 
\end{proof}

The proof indicates that the continuous equation (\ref{eq:theorem:error-FP_copy}) of the dynamics (\ref{eq:noisy-GD_dynamics_copy}) associated with noisy gradient descent contains a discretization error term compared to the continuous-time counterpart (\ref{eq:MF-Langevin-Dynamics}).
Usually, this error depends on the step size $\eta$, hence $\eta$ should be chosen sufficiently small to achieve a required optimization accuracy.
Moreover, to derive a convergence rate, the relationship between $\eta$ and the discretization error should be explicitly characterized. 
The following lemma outlines this dependency in the case of mean field neural networks~\eqref{eq:mean-field-nn}.   

\begin{lemma}\label{lemma:F-diff_second_moment}
Consider the loss minimization setting using mean field neural networks in Section \ref{subsec:mfnn} with $r(\theta)=\|\theta\|_2^2$.
Suppose that $\ell(\cdot,y)$ and $h(\cdot,x)$ are differentiable and sufficiently smooth, that is, there exist positive constants $C_1,\ldots,C_4$ such that 
$| \partial_z \ell (z,y)| \leq C_1$, 
$| \partial_z\ell(z,y)- \partial_z\ell(z',y) |\leq C_2 |z-z'|$,
$\| \partial_\theta h_{\theta}(x)\|_2 \leq C_3$, and 
$\| \partial_\theta h_{\theta}(x) - \partial_\theta h_{\theta'}(x) \|_2 \leq C_4\|\theta - \theta'\|_2$.
Also suppose $2\lambda' \eta < 1$ and the following condition holds for the iterate $\theta^{(k)}\sim q^{(k)}(\theta)\rd\theta$.
\begin{equation}\label{lem:F-diff_second_moment}
    \bE_{\theta^{(k)}} [ \|\theta^{(k)}\|_2^2] \leq \frac{\eta C_1^2 C_3^2 + 2\lambda d}{2\eta \lambda'^2}.     
\end{equation} 
Then, we get for any $0 < t \leq \eta$,
\begin{equation}\label{lem:F-diff_second_main}
\delta_{q^{(k)},t} 
 \leq 40 \eta ( C_2^2 C_3^4 + (C_1C_4 + 2\lambda' )^2 ) ( \eta C_1^2 C_3^2 + \lambda d ).
\end{equation}
In addition, the same bound as (\ref{lem:F-diff_second_moment}) also holds for the second moment of the next iterate $\|\theta^{(k+1)}\|_2$.
\end{lemma}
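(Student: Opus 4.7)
The plan is to reduce both halves of the lemma to a control on the one-step displacement $\bE\|\theta^{(k+1)}_t - \theta^{(k)}\|_2^2$, which I will bound using the explicit form of the update~(\ref{eq:noisy-GD-2}) together with hypothesis~(\ref{lem:F-diff_second_moment}). With $r(\theta)=\|\theta\|_2^2$ the gradient factors as
\[ \nabla\tfrac{\delta F}{\delta q}(q)(\theta) = A(q,\theta) + 2\lambda'\theta, \quad A(q,\theta)\defeq\bE_{(X,Y)}[\partial_z\ell(h_q(X),Y)\,\partial_\theta h_\theta(X)], \]
and the boundedness assumptions give $\|A(q,\theta)\|_2 \leq C_1 C_3$ uniformly. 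I use this splitting once for the discretization error and once for the moment recursion.

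For inequality~(\ref{lem:F-diff_second_main}), the first step is to decompose the difference of gradients by inserting the intermediate point $(q^{(k+1)}_t, \theta^{(k)})$ and separating the regularizer, obtaining three contributions. The change in $A$ at fixed $\theta^{(k)}$ is controlled by $|h_{q^{(k)}}(X) - h_{q^{(k+1)}_t}(X)|$ via $C_2$-Lipschitzness of $\partial_z\ell$ and the $C_3$ bound on $\partial_\theta h_\theta$; because $h_q(x)=\bE_\theta[h_\theta(x)]$ and $h_\theta$ is $C_3$-Lipschitz in $\theta$, Jensen's inequality yields a deterministic bound of order $C_2 C_3^2 (\bE\|\theta^{(k)}-\theta^{(k+1)}_t\|_2^2)^{1/2}$. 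The change in $A$ at fixed $q^{(k+1)}_t$ is pointwise $\leq C_1 C_4\|\theta^{(k)}-\theta^{(k+1)}_t\|_2$ (using $|\partial_z\ell|\leq C_1$ and $C_4$-Lipschitzness of $\partial_\theta h_\theta$ in $\theta$), and the regularizer contributes $2\lambda'\|\theta^{(k)}-\theta^{(k+1)}_t\|_2$. Squaring, applying $(a+b)^2 \leq 2a^2 + 2b^2$ twice, and using $C_1^2 C_4^2 + 2\lambda'^2 \leq (C_1 C_4 + 2\lambda')^2$ yields
\[ \delta_{q^{(k)},t} \leq 4\bigl(C_2^2 C_3^4 + (C_1 C_4 + 2\lambda')^2\bigr)\,\bE\|\theta^{(k+1)}_t-\theta^{(k)}\|_2^2. \]
From $\theta^{(k+1)}_t-\theta^{(k)} = -t(A(q^{(k)},\theta^{(k)}) + 2\lambda'\theta^{(k)}) + \sqrt{2\lambda t}\,\xi^{(k)}$ and independence of $\xi^{(k)}$, I get $\bE\|\theta^{(k+1)}_t-\theta^{(k)}\|_2^2 \leq 2t^2 C_1^2 C_3^2 + 8t^2\lambda'^2\bE\|\theta^{(k)}\|_2^2 + 2\lambda t d$. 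Plugging in~(\ref{lem:F-diff_second_moment}) and using $t \leq \eta$ (together with $2\eta\lambda' < 1$ to simplify) produces the bound $10\eta(\eta C_1^2 C_3^2 + \lambda d)$, whose product with the preceding $4(\cdots)$ factor yields the constant $40$ in~(\ref{lem:F-diff_second_main}).

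For the propagation of the moment bound, I rewrite the $t=\eta$ update as $\theta^{(k+1)} = (1-2\eta\lambda')\theta^{(k)} - \eta A(q^{(k)},\theta^{(k)}) + \sqrt{2\lambda\eta}\,\xi^{(k)}$ and apply the weighted inequality $\|a+b\|_2^2 \leq (1+\varepsilon)\|a\|_2^2 + (1+\varepsilon^{-1})\|b\|_2^2$ with $1+\varepsilon = (1-2\eta\lambda')^{-1}$, which is legitimate since $2\eta\lambda'<1$. Combined with $\|A\|_2 \leq C_1 C_3$ and the independence of $\xi^{(k)}$, this gives $\bE\|\theta^{(k+1)}\|_2^2 \leq (1-2\eta\lambda')\bE\|\theta^{(k)}\|_2^2 + \eta C_1^2 C_3^2/(2\lambda') + 2\lambda\eta d$. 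Substituting hypothesis~(\ref{lem:F-diff_second_moment}) and noting that $2\eta\lambda d \leq 2\lambda d/\lambda'$ (which follows from $\eta\lambda' \leq 1$) then shows that $\bE\|\theta^{(k+1)}\|_2^2$ satisfies the same bound. The main delicate step is calibrating the weight in this last application of Young's inequality so that the contraction factor $(1-2\eta\lambda')$ exactly absorbs the growth from the linear-in-$\theta$ part of the drift; once this is done, the rest is careful bookkeeping of the Lipschitz and boundedness constants.
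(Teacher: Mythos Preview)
Your proposal is correct and follows essentially the same route as the paper's proof: the same three-term decomposition of the gradient difference, the same coupling estimate $|h_q(x)-h_{q'}(x)|\leq C_3\sqrt{\bE\|\theta-\theta'\|_2^2}$, the same displacement bound via the update rule and the moment hypothesis, and the same weighted Young inequality with parameter $1+\varepsilon=(1-2\eta\lambda')^{-1}$ for the moment recursion. The only differences are cosmetic constant-tracking---e.g., you exploit independence of $\xi^{(k)}$ to tighten the displacement bound (and to separate the noise term in the moment recursion), while the paper instead combines the $C_1C_4$ and $2\lambda'$ contributions before squaring---so that the final factor~$40$ is reached through slightly different intermediate constants.
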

This is to say, $\delta_{q^{(k)},t} = O(\eta)$ for all $k$ as long as (\ref{lem:F-diff_second_moment}) holds for $k=0$.
Therefore, in combination with Theorem \ref{theorem:discrete-time}, we arrive at a convergence rate guarantee for the discrete-time dynamics in optimizing mean field neural networks (up to certain error).
Specifically, the following Corollary implies an iteration complexity of $O\left( \frac{1}{\epsilon \alpha^2 \lambda^2} \log \frac{1}{\epsilon}\right)$ to achieve an $\epsilon$-accurate solution. 

\begin{corollary}\label{corollary:discrete-time-complexity}
Consider the same setting as Lemma~\ref{lemma:F-diff_second_moment} and suppose Assumption \ref{assumption:ls_ineq} holds. 
Let $\{ \theta^{(k)}\}_{k=0}^\infty$ be the iterations of random variables generated by the discrete-time dynamics (\ref{eq:noisy-GD}) with the step size $\eta = O(\epsilon \alpha \lambda)$ and $\{q^{(k)}\}_{k=0}^\infty$ be the corresponding probability distributions.
Then if the condition (\ref{lem:F-diff_second_moment}) is satisfied for the initial iterate $\theta^{(0)}\sim q^{(0)}(\theta)\rd\theta$, we know that for any step size $2\lambda' \eta < 1$, the following statement holds true for $k=0,1,2,3...$, 
\begin{align}
&\cL(q^{(k)})- \cL(q_*)=  \notag\\
&O(\epsilon) 
+ \exp\left( - O(\epsilon\alpha^2\lambda^2 k) \right)\left( \cL(q^{(0)}) - \cL(q_*) \right).
\label{eq:linear-rate-discrete}
\end{align}
\end{corollary}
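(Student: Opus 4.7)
The plan is to chain together Lemma~\ref{lemma:F-diff_second_moment} and Theorem~\ref{theorem:discrete-time}, treating the corollary essentially as a bookkeeping exercise once the discretization error has been uniformly controlled across iterations.

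First, I would establish by induction on $k$ that the second moment bound (\ref{lem:F-diff_second_moment}) holds for \emph{every} iterate $\theta^{(k)}$ generated by the discrete dynamics (\ref{eq:noisy-GD}). The base case $k=0$ is part of the hypothesis. For the inductive step, the final sentence of Lemma~\ref{lemma:F-diff_second_moment} directly states that the same second moment bound is propagated from $\theta^{(k)}$ to $\theta^{(k+1)}$, provided $2\lambda'\eta < 1$. This gives us a uniform-in-$k$ second moment control, which is exactly what is needed to invoke the lemma at every iteration.

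Second, with the second moment bound in hand for every $k$, inequality (\ref{lem:F-diff_second_main}) of Lemma~\ref{lemma:F-diff_second_moment} yields
\[
\delta_{q^{(k)},t} \leq 40 \eta ( C_2^2 C_3^4 + (C_1C_4 + 2\lambda' )^2 ) ( \eta C_1^2 C_3^2 + \lambda d ),
\qquad 0 < t \leq \eta.
\]
Since $2\lambda'\eta<1$ and $\eta = O(\epsilon\alpha\lambda)$ is small, the right-hand side is $O(\eta)$ where the hidden constant depends only on $\lambda,\lambda',d$ and the Lipschitz/boundedness constants of $\ell$ and $h$. Hence I may take $\delta_\eta = C\eta$ with $C$ independent of $k$, and feed this into Theorem~\ref{theorem:discrete-time}.

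Third, Theorem~\ref{theorem:discrete-time} then gives
\[
\cL(q^{(k)}) - \cL(q_*) \leq \frac{\delta_\eta}{2\alpha\lambda} + \exp(-\alpha\lambda\eta k)\bigl(\cL(q^{(0)}) - \cL(q_*)\bigr).
\]
The corollary follows by substituting the choice $\eta = O(\epsilon\alpha\lambda)$: the bias term becomes $\frac{C\eta}{2\alpha\lambda} = O(\epsilon)$, while the contraction rate becomes $\alpha\lambda\eta = O(\epsilon\alpha^2\lambda^2)$, matching (\ref{eq:linear-rate-discrete}). To read off the iteration complexity, setting the exponential term to $O(\epsilon)$ requires $k = O\!\bigl(\tfrac{1}{\epsilon\alpha^2\lambda^2}\log\tfrac{1}{\epsilon}\bigr)$ iterations, as claimed in the text preceding the corollary.

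The argument contains no real obstacle, since all the substantive work is in Theorem~\ref{theorem:discrete-time} and Lemma~\ref{lemma:F-diff_second_moment}. The only point that requires a moment of care is the uniform-in-$k$ propagation of the second moment hypothesis; if one were sloppy and only applied Lemma~\ref{lemma:F-diff_second_moment} at $k=0$, one could not conclude $\delta_{q^{(k)},t} = O(\eta)$ at later steps. The explicit ``$\theta^{(k+1)}$ also satisfies (\ref{lem:F-diff_second_moment})'' conclusion in the lemma is precisely what makes the induction go through, and it also motivates why the condition $2\lambda'\eta<1$ appears in the corollary's hypothesis.
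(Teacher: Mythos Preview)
Your proposal is correct and follows precisely the approach the paper takes: the paragraph immediately preceding the corollary already spells out the argument, observing that $\delta_{q^{(k)},t} = O(\eta)$ for all $k$ once the second moment bound holds at $k=0$ (via the induction you describe), and then invoking Theorem~\ref{theorem:discrete-time}. Your writeup simply makes this bookkeeping explicit, including the substitution $\eta = O(\epsilon\alpha\lambda)$ to read off the bias and rate.
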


Finally, we remark that discretization error induced by finite-particle approximation can also be controlled via a direct application of Theorem 3 of \cite{mei2018mean}. However, such finite-particle error grows exponentially with the time horizon, and thus is not negligible unless the exponent in the linear convergence \eqref{eq:linear-rate-discrete} is sufficiently large. 
In future work, we intend to investigate conditions under which such exponential dependence can be avoided (e.g., as in \citet{chen2020dynamical}).  

%----------------------------------------------------------------------------------
% Primal-Dual Viewpoint
%----------------------------------------------------------------------------------
\section{PRIMAL-DUAL VIEWPOINT}\label{sec:primal-dual_viewpoint}
As seen in Section \ref{sec:convergence_analysis}, the proximal Gibbs distribution $p_q$ plays an important role in our convergence rate analysis.
In this section, we complement the previous results by presenting a primal and dual perspective of this proximal distribution in the (regularized) empirical risk minimization setting. Based on this connection, we show that the duality gap can be minimized by the mean field Langevin dynamics (\ref{eq:MF-Langevin-Dynamics}). 
%----------------------------------------------------------------------------------
% Primal-Dual Problem
%----------------------------------------------------------------------------------
\subsection{Primal-dual Problem}
We first introduce a primal-dual formulation of the empirical risk minimization problem.
For a train dataset $\{(x_i,y_i)\}_{i=1}^n$ and differentiable convex loss function: $\ell(z,y)$ in $z$, we consider the minimization of the following regularized empirical risk:
\begin{equation}\label{prob:erm-primal}
    %\min_{q \in \cP} 
    %\left\{ 
    \cL(q) = \frac{1}{n}\sum_{i=1}^n \ell(h_q(x_i),y_i) 
    + \lambda' \bE_{\theta \sim q}[ \|\theta\|_2^2 ] 
    + \lambda \bE_{q} [ \log q ].
    %\right\}.
\end{equation}
Note that this problem is a special case of (\ref{prob:org}) by setting $F(q)=\frac{1}{n}\sum_{i=1}^n \ell(h_q(x_i),y_i) + \lambda' \bE_{\theta \sim q}[ \|\theta\|_2^2 ]$.
Write $\ell_i(z) = \ell(z,y_i)$ and $\ell_i^*(\cdot)$ as its Fenchel conjugate, i.e.,
\[ \ell_i^*(z^*) = \sup_{z \in \bR } \{ zz^* - \ell_i(z) \}. ~~\text{for } z^* \in \bR\]
Also, for any given vector $g=\{g_i\}_{i=1}^n \in \bR^n$, we define
\begin{equation*}
    q_g(\theta) 
    = \exp\left( - \frac{1}{\lambda} \left( \frac{1}{n}\sum_{i=1}^n h_{\theta}(x_i)g_i 
    + \lambda'\|\theta\|_2^2 \right) \right).
\end{equation*}
Then, the dual problem of (\ref{prob:erm-primal}) is defined as 
\begin{equation}\label{prob:erm-dual}
    \max_{g \in \bR^n}\left\{ 
    \cD(g) = -\frac{1}{n}\sum_{i=1}^n \ell_i^*(g_i) 
    - \lambda \log\int q_g(\theta) \rd\theta
    \right\}.
\end{equation}
The duality theorem \citep{rockafellar,bauschke2011convex,oko2022psdca} guarantees the relationship $\cD(g) \leq \cL(q)$ for any $g \in \bR^n$ and $q\in \cP$ , and it is known that the duality gap $\cL(q) - \cD(g)$ vanishes at the solutions of (\ref{prob:erm-primal}) and (\ref{prob:erm-dual}) when they exist.
In our problem setting, it is possible to establish a stronger and more precise result.
 
We denote $g_q = \{ \partial_z \ell(z,y_i)|_{z=h_q(x_i)}\}_{i=1}^n \in \bR^n$ ($q \in \cP$). 
The following theorem exactly characterizes the duality gap $\cL(q) - \cD( g_q )$ between $q \in \cP$ and $g_q \in \bR^n$ via the proximal Gibbs distribution $p_q$. 

\begin{theorem}[Duality Theorem]\label{theorem:dual}
Suppose $\ell(\cdot,y)$ is convex and differentiable.
For any $q \in \cP$ the duality gap between $q \in \cP$ and $g_q$ of the problems (\ref{prob:erm-primal}) and (\ref{prob:erm-dual}) is 
\[ 0 \leq \cL(q) - \cD(g_q) = \lambda \KL( q \| p_{q}). \]
\end{theorem}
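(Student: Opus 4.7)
The plan is to prove the identity by direct algebraic manipulation, exploiting two structural coincidences that the choice $g = g_q$ creates: first, the Fenchel--Young inequality at $g_q$ becomes an equality, and second, the unnormalized density $q_{g_q}$ in the dual objective is exactly the unnormalized proximal Gibbs density $p_q$ (with the same normalizing constant $Z(q)$).

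First I would verify the identification of $p_q$ with $q_{g_q}$. Computing the functional derivative of $F(q) = \frac{1}{n}\sum_i \ell(h_q(x_i), y_i) + \lambda' \bE_{q}[\|\theta\|_2^2]$ using the chain rule gives
\begin{equation*}
  \frac{\delta F}{\delta q}(q)(\theta) = \frac{1}{n}\sum_{i=1}^n \partial_z\ell(h_q(x_i),y_i)\, h_\theta(x_i) + \lambda' \|\theta\|_2^2 = \frac{1}{n}\sum_{i=1}^n g_{q,i}\, h_\theta(x_i) + \lambda' \|\theta\|_2^2,
\end{equation*}
so that $q_{g_q}(\theta) = \exp\bigl(-\lambda^{-1}\tfrac{\delta F}{\delta q}(q)(\theta)\bigr)$ and therefore $\int q_{g_q}(\theta)\rd\theta = Z(q)$ and $p_q = q_{g_q}/Z(q)$. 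This immediately implies $-\lambda \log Z(q) = -\lambda \log \int q_{g_q}\rd\theta$, matching the second term of $\cD(g_q)$.

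Next I would invoke Fenchel--Young \emph{equality}. Because $g_{q,i} = \partial_z \ell_i(h_q(x_i))$ is the dual variable conjugate to $h_q(x_i)$, convexity of $\ell_i$ yields $\ell_i(h_q(x_i)) + \ell_i^*(g_{q,i}) = g_{q,i}\, h_q(x_i) = g_{q,i}\, \bE_q[h_\theta(x_i)]$. Summing over $i$ rearranges the loss term in $\cL(q)$ into a linear functional of $q$:
\begin{equation*}
  \tfrac{1}{n}\sum_{i=1}^n \ell_i(h_q(x_i)) + \tfrac{1}{n}\sum_{i=1}^n \ell_i^*(g_{q,i}) = \bE_{q}\!\left[\tfrac{1}{n}\sum_{i=1}^n g_{q,i}\, h_\theta(x_i)\right].
\end{equation*}

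Then I would combine these two ingredients. Adding $\lambda' \bE_q[\|\theta\|_2^2]$ to both sides turns the right-hand side into $\bE_q[\tfrac{\delta F}{\delta q}(q)]$, so
\begin{equation*}
  \cL(q) - \cD(g_q) = \bE_{q}\!\left[\tfrac{\delta F}{\delta q}(q)(\theta)\right] + \lambda \bE_q[\log q] + \lambda \log Z(q).
\end{equation*}
Substituting $\tfrac{\delta F}{\delta q}(q)(\theta) = -\lambda \log p_q(\theta) - \lambda \log Z(q)$ (from the definition of $p_q$) collapses the $\log Z(q)$ terms and leaves $\lambda\bE_q[\log(q/p_q)] = \lambda \KL(q\|p_q)$, which is non-negative by Gibbs' inequality (equivalently, by weak duality $\cL(q)\geq \cD(g_q)$).

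I do not expect a genuine obstacle here: the result is essentially a bookkeeping identity, and the only subtlety is keeping the normalization constant $Z(q)$ straight when translating between $p_q$, $q_{g_q}$, and $\log p_q$. The conceptual content — and the reason the proof works so cleanly — is that evaluating the dual at $g_q$ is precisely what is needed for the Fenchel--Young inequality to be tight, so the remaining slack in the duality gap is entirely the entropic $\KL(q\|p_q)$ slack, rather than any loss-side slack.
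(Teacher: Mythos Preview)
Your proposal is correct and takes essentially the same approach as the paper: both proofs hinge on the Fenchel--Young equality $\ell_i(h_q(x_i)) + \ell_i^*(g_{q,i}) = g_{q,i}\,h_q(x_i)$ at the conjugate pair $(h_q(x_i),g_{q,i})$, together with the identification $q_{g_q} \propto p_q$ (hence $\int q_{g_q} = Z(q)$), followed by straightforward algebra. The only cosmetic difference is that the paper starts from $\cD(g_q)$ and builds it up to $\cL(q) - \lambda\KL(q\|p_q)$, whereas you compute $\cL(q) - \cD(g_q)$ directly and reduce it to $\lambda\KL(q\|p_q)$.
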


We make the following observations.
First, this theorem can be seen as a refinement of Proposition \ref{prop:optimization_gap_for_strong_convex_problems}, in which $\lambda \KL (q \| p_{q})$ upper bounds the optimality gap $\cL(q) - \cL(q_*)$ in the general setting. Theorem \ref{theorem:dual} reveals that $\lambda \KL (q \| p_{q})$ is \textit{exactly} the duality gap $\cL(q) - \cD(g_q)$ for the empirical risk minimization problems. 
Second, because of the relationship $p_q(\theta) = \frac{q_{g_q}(\theta)}{\int q_{g_q}(\theta) \rd\theta}$, we notice that the proximal Gibbs distribution $p_q$ of $q$ can be seen as a ``round trip'' between the primal and dual spaces: $q \to g_q \to q_{g_q} \propto p_q$; the equation $\cL(q) - \cD(g_q) = \lambda \KL( q \| p_{q})$ gives an interesting relationship among these variables.
Third, Combining Theorem \ref{theorem:dual} and the duality theorem (e.g., see Proposition 1 of \citet{oko2022psdca}): $\cD(g) \leq \cL(q)$ for any $q \in \cP$ and $g \in \bR^n$, we see that $g_{q_*}$ is the solution of the dual problem (\ref{prob:erm-dual}). 

Another interesting quantity to investigate is the primal objective $\cL(p_q)$ of the proximal Gibbs distribution $p_q$.
The next theorem gives an upper bound on a duality gap between $p_q$ and a dual variable $g_q$.
\begin{theorem}[Second Duality Theorem] \label{theorem:dual2}
Suppose $\ell(\cdot,y)$ is convex and $C_2$-smooth, that is, $|\partial_z\ell(z,y) - \partial_z\ell(z',y)| \leq C_2|z-z'|$, and  $|h_{\theta}(x)| \leq B$.
Then for any $q \in \cP$, the duality gap between $p_q \in \cP$ and $g_q$ of the problems (\ref{prob:erm-primal}) and (\ref{prob:erm-dual}) satisfies
\[ 0 \leq \cL(p_q) - \cD(g_q) \leq (\lambda + 2B^2 C_2) \KL( q \| p_q). \]
\end{theorem}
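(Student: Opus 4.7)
The plan is to compute $\cL(p_q) - \cD(g_q)$ directly, exploiting the fact (already noted right after Theorem~\ref{theorem:dual}) that $p_q(\theta) = q_{g_q}(\theta)/Z(q)$. This proportionality will cause the $\lambda'\|\theta\|_2^2$ regularizer and the normalization constant $\log Z(q)$ to cancel inside $\cL(p_q) - \cD(g_q)$, collapsing the duality gap to an average of Bregman divergences of $\ell$; these are then controlled by convexity and $C_2$-smoothness.

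Concretely, I first substitute $\lambda \log p_q(\theta) = -\bigl(\frac{1}{n}\sum_i (g_q)_i h_\theta(x_i) + \lambda'\|\theta\|_2^2\bigr) - \lambda \log Z(q)$ into the entropy term $\lambda \bE_{p_q}[\log p_q]$ of $\cL(p_q)$. The $\lambda'\bE_{p_q}[\|\theta\|_2^2]$ contribution exactly cancels with the regularizer appearing in $F(p_q)$, leaving
\[
\cL(p_q) = \frac{1}{n}\sum_{i=1}^n \bigl[\ell(h_{p_q}(x_i),y_i) - (g_q)_i h_{p_q}(x_i)\bigr] - \lambda \log Z(q).
\]
Since $(g_q)_i = \partial_z\ell(h_q(x_i),y_i)$, Fenchel--Young holds with equality, so $\ell_i^*((g_q)_i) = (g_q)_i h_q(x_i) - \ell(h_q(x_i),y_i)$; combined with $\int q_{g_q}\rd\theta = Z(q)$, this gives
\[
\cD(g_q) = \frac{1}{n}\sum_{i=1}^n \bigl[\ell(h_q(x_i),y_i) - (g_q)_i h_q(x_i)\bigr] - \lambda \log Z(q).
\]
Subtracting yields the clean identity
\[
\cL(p_q) - \cD(g_q) = \frac{1}{n}\sum_{i=1}^n \bigl[\ell(h_{p_q}(x_i),y_i) - \ell(h_q(x_i),y_i) - \partial_z\ell(h_q(x_i),y_i)(h_{p_q}(x_i) - h_q(x_i))\bigr],
\]
which is an average of Bregman divergences of $\ell(\cdot,y_i)$ based at $h_q(x_i)$.

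From this identity, the lower bound $\cL(p_q) - \cD(g_q) \geq 0$ is immediate from convexity of $\ell(\cdot,y_i)$. For the upper bound, the standard quadratic upper bound for convex $C_2$-smooth functions gives $\cL(p_q) - \cD(g_q) \leq \frac{C_2}{2n}\sum_i(h_{p_q}(x_i) - h_q(x_i))^2$. Writing $h_{p_q}(x_i) - h_q(x_i) = \int h_\theta(x_i)(p_q - q)(\theta)\rd\theta$ and using $|h_\theta(x)|\leq B$ together with Pinsker's inequality gives $(h_{p_q}(x_i) - h_q(x_i))^2 \leq 4B^2 \|p_q - q\|_{\mathrm{TV}}^2 \leq 2B^2 \KL(q\|p_q)$. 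Combining, $\cL(p_q) - \cD(g_q) \leq B^2 C_2 \KL(q\|p_q)$, which is already tighter than (and hence implies) the stated $(\lambda + 2B^2 C_2)\KL(q\|p_q)$; the extra $\lambda + B^2 C_2$ slack in the theorem presumably reflects a route that first passes through Theorem~\ref{theorem:dual} and uses a looser form of Pinsker.

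I expect the main obstacle to be bookkeeping rather than any deep idea: one must carefully track the cancellations of the $\lambda'\|\theta\|_2^2$ regularizer and the $\lambda \log Z(q)$ normalization so that the duality gap collapses to a pure Bregman divergence, and invoke Fenchel--Young equality (rather than merely the inequality) using the specific choice $g = g_q$. Once the above identity is in hand, convexity and the descent lemma together with Pinsker's inequality finish the proof in a couple of lines each.
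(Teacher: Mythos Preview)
Your proposal is correct and actually yields a sharper bound than the paper's. The paper proceeds differently: instead of collapsing $\cL(p_q) - \cD(g_q)$ directly to a Bregman divergence of $\ell$, it splits the gap as $(\cL(p_q) - \cL(q)) + (\cL(q) - \cD(g_q))$. The second piece equals $\lambda\,\KL(q\|p_q)$ by Theorem~\ref{theorem:dual}, which is where the $\lambda$ in the final constant originates. The first piece is bounded via convexity of $\cL$ at the point $p_q$: one computes $\frac{\delta\cL}{\delta q'}\big|_{q'=p_q}$, notes that up to a constant it equals $\frac{1}{n}\sum_i(\partial_z\ell(h_{p_q}(x_i),y_i)-\partial_z\ell(h_q(x_i),y_i))h_\theta(x_i)$, and then bounds $-\int \frac{\delta\cL}{\delta q'}\big|_{p_q}(q-p_q)$ by $\frac{C_2}{n}\sum_i|h_q(x_i)-h_{p_q}(x_i)|^2 \leq 2B^2C_2\,\KL(q\|p_q)$ using smoothness and Pinsker. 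Summing the two pieces gives $(\lambda + 2B^2C_2)\KL(q\|p_q)$.

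Your route avoids the detour through $\cL(q)$ entirely and exploits the descent lemma (quadratic upper bound) rather than only first-order convexity, which buys the factor of $\tfrac12$; together these account exactly for the $\lambda + B^2C_2$ slack you noticed. Your remark attributing part of the slack to ``a looser form of Pinsker'' is slightly off, though: both arguments use the same Pinsker inequality; the extra $B^2C_2$ comes from the paper bounding $\cL(p_q)-\cL(q)$ via the linearization at $p_q$ (which involves a product of two differences, each of size $C_2|h_q-h_{p_q}|$ and $|h_q-h_{p_q}|$), rather than directly bounding the Bregman remainder by $\tfrac{C_2}{2}|h_q-h_{p_q}|^2$.
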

This theorem provides a new choice for solving the problem (\ref{prob:erm-primal}). That is, after obtaining $q$ by the optimization, we can alternatively utilize $p_q$ as an alternative solution, which may be efficiently approximated by sampling methods for Gibbs distributions.

%----------------------------------------------------------------------------------
% Primal-Dual Viewpoint of Mean Field Langevin Dynamics
%----------------------------------------------------------------------------------
\subsection{Convergence of Duality Gap}
Combining Theorem \ref{theorem:dual} with Corollary \ref{corollary:convergence}, we see that mean field Langevin dynamics solves both the primal and dual problems in the following sense.
\begin{corollary}\label{convergence:dual}
Consider the evolution $\{q_t\}_{t\geq 0}$ ,which satisfies (\ref{eq:nonlinear_FP_equation}) for the problem (\ref{prob:erm-primal}) under the same settings as in Corollary \ref{corollary:convergence} and Theorem \ref{theorem:dual}.
Let $\{g_{q_t}\}_{t\geq 0}$ be the associated dynamics in $\bR^n$. Then for $t\geq 1$,
\begin{align*}
\inf_{s \in [0,t]} &\{ \cL(q_s) - \cD(g_{g_s}) \}
 \\
&\le\frac{\exp( -2 \alpha \lambda (t-1))}{2\alpha \lambda}\left( \cL(q_0) - \cL(q_*) \right). 
\end{align*}
\end{corollary}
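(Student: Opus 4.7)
The plan is to observe that Corollary \ref{convergence:dual} is essentially a direct composition of two results already in hand: Theorem \ref{theorem:dual}, which equates the duality gap with $\lambda \KL(q \| p_q)$, and Corollary \ref{corollary:convergence}, which gives an exponential decay rate on $\inf_{s \in [0,t]} \KL(q_s \| p_{q_s})$ along the continuous-time mean field Langevin evolution. No new dynamics or PDE arguments should be needed.

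Concretely, I would proceed as follows. First, I would invoke Theorem \ref{theorem:dual}, which is applicable because the problem (\ref{prob:erm-primal}) is a special case of (\ref{prob:org}) with $F(q) = \frac{1}{n}\sum_{i=1}^n \ell(h_q(x_i), y_i) + \lambda' \bE_{\theta\sim q}[\|\theta\|_2^2]$ and $\ell(\cdot, y)$ convex and differentiable. This yields the pointwise-in-time identity
\[
\cL(q_s) - \cD(g_{q_s}) \;=\; \lambda \, \KL(q_s \| p_{q_s}) \qquad \text{for every } s \geq 0.
\]
Taking the infimum of both sides over $s \in [0,t]$ and pulling out the nonnegative factor $\lambda$ gives
\[
\inf_{s \in [0,t]} \{\cL(q_s) - \cD(g_{q_s})\} \;=\; \lambda \inf_{s\in[0,t]} \KL(q_s \| p_{q_s}).
\]

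Next, I would apply Corollary \ref{corollary:convergence}, which under Assumptions \ref{assumption:potential} and \ref{assumption:ls_ineq} bounds the right-hand side by
\[
\lambda \cdot \frac{\exp(-2\alpha\lambda(t-1))}{2\alpha\lambda^2}\bigl(\cL(q_0) - \cL(q_*)\bigr)
\;=\; \frac{\exp(-2\alpha\lambda(t-1))}{2\alpha\lambda}\bigl(\cL(q_0) - \cL(q_*)\bigr),
\]
which is exactly the claimed inequality. The non-negativity $\cL(q_s) - \cD(g_{q_s}) \geq 0$ is immediate from Theorem \ref{theorem:dual}.

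I do not anticipate any substantive obstacle: the only thing to check carefully is the compatibility of the assumptions between the two results being composed, namely that the hypotheses of Theorem \ref{theorem:dual} (convex differentiable loss in the empirical risk setting) and Corollary \ref{corollary:convergence} (Assumptions \ref{assumption:potential} and \ref{assumption:ls_ineq} together with $\{q_t\}$ solving (\ref{eq:nonlinear_FP_equation})) are all maintained under the stated ``same settings'' clause. Once that compatibility is noted, the proof collapses to a one-line substitution.
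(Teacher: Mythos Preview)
Your proposal is correct and matches the paper's approach exactly: the paper states this corollary as an immediate consequence of combining Theorem \ref{theorem:dual} (the identity $\cL(q) - \cD(g_q) = \lambda \KL(q \| p_q)$) with Corollary \ref{corollary:convergence} (the exponential bound on $\inf_{s\in[0,t]}\KL(q_s\|p_{q_s})$), and your substitution and factor-of-$\lambda$ arithmetic are precisely what is needed.
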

Thus, we can conclude that mean field Langevin dynamics also solves the dual problem, that is,  $\cD(g_{q_t})$ also converges to $\cD(g_{q_*})$.
Following the same reasoning, we can derive a corollary of Theorem \ref{theorem:dual2} demonstrating the convergence of $\cL(p_{q_t}) - \cD(g_{g_t})$. 
  
\paragraph{Evaluation of duality gap.}  
One benefit of this primal-dual formulation is that we can observe the optimization gap by computing the duality gap $\cL(q) - \cD(g)$ or $\cL(p_q) - \cD(g)$, without the knowledge of the optimal value $\cL(q_*)$. 
In Figure~\ref{fig:duality-gap} we empirically demonstrate the duality gap on a regression problem with the squared loss. We set $n=1000, d=5$, and consider a simple student-teacher setting, where the teacher model is a two-layer sigmoid network with orthogonal neurons, and the student model $h_\Theta$ is a two-layer mean field neural network of width $M=1000$ with tanh activation. The student model is optimized by the noisy gradient descent with $\eta=0.01$, and we use the Langevin algorithm to obtain approximate samples from the proximal Gibbs distribution $p_q$. 
For the primal objective $\cL$, we adopt the $k$-nearest neighbors estimator \citep{kozachenko1987sample} with $k=10$ to estimate the entropy; for the dual objective $\cD$, the approximation of the log integral term is described in Appendix~\ref{app:compute-dual}.

\begin{figure}[!htb]   
\vspace{-1mm} 
\centering
\begin{minipage}[t]{0.62\linewidth} 
{\includegraphics[width=1\linewidth]{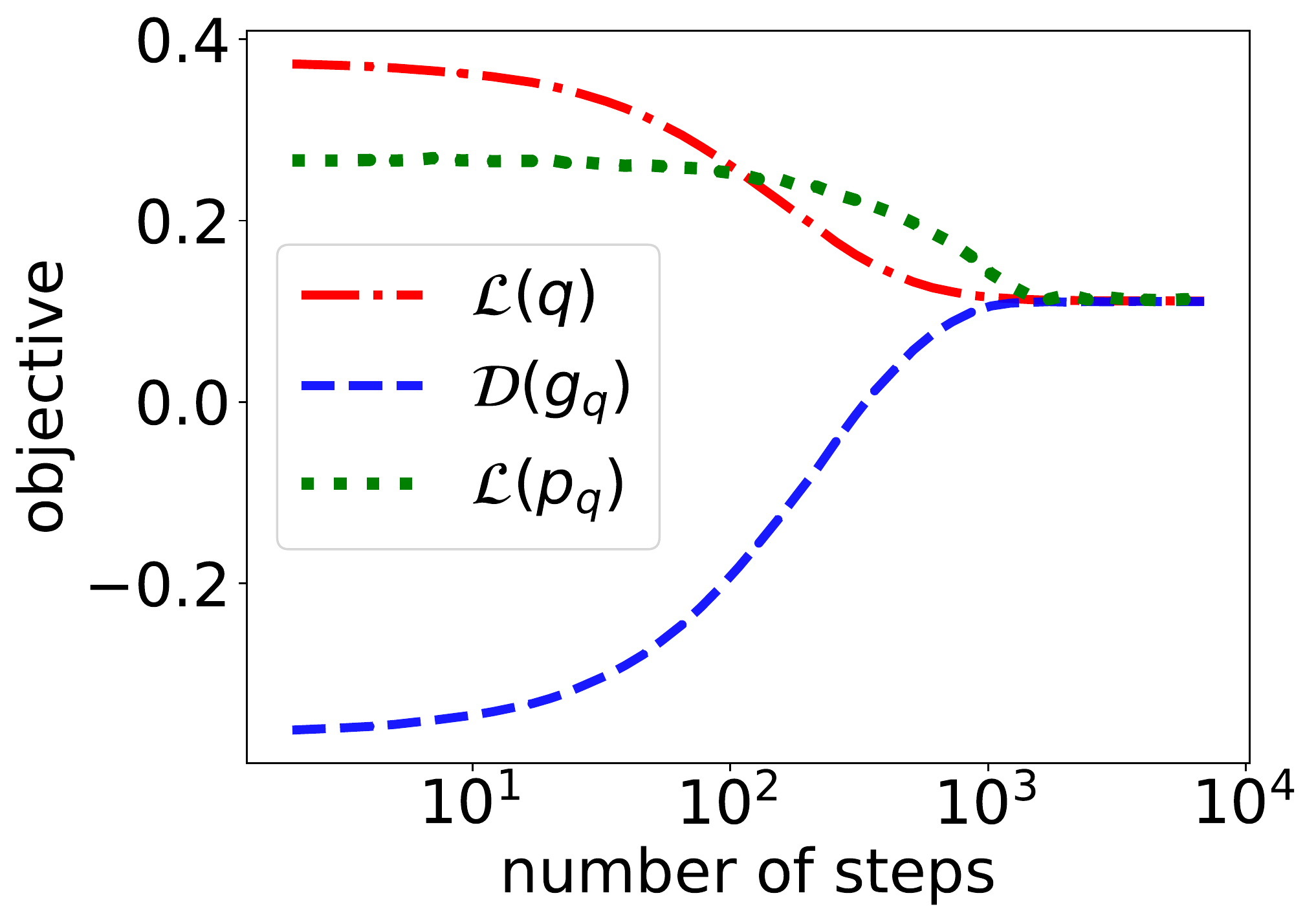}}  
\end{minipage}
% \vspace{-5mm}  
\begin{minipage}[b]{0.365\linewidth} 
\captionof{figure}{\small Illustration of duality gap: two-layer tanh network optimizing the empirical squared error. We set $\lambda=\lambda'=10^{-2}$.}    
\label{fig:duality-gap} 
\end{minipage}
\vspace{-5mm}  
% \vspace{-10mm}
\end{figure}    

Observe that towards the end of training, the primal ($\cL(q)$ and $\cL(p_q)$) and dual ($\cD(g_q)$) objectives become close, which is consistent with Theorem~\ref{theorem:dual} and \ref{theorem:dual2}.

\section*{CONCLUSION}
We established quantitative global convergence guarantee for the mean field Langevin dynamics in both continuous- and discrete-time, by adapting convex optimization techniques into the space of measures, in combination with standard analysis of the Langevin dynamics. 
Looking forward, an interesting future direction is to conduct the analysis under weaker isoperimetry conditions such as the Poincar\'{e} inequality, which covers more general objectives. It is also important to refine our convergence result (e.g., exponential dependence on $1/\lambda$ in the LSI constant) under additional structure of the learning problem.   
Another interesting direction is to explore applications of the mean field dynamics beyond the optimization of neural networks.  

\bigskip
\section*{Acknowledgment}
AN was partially supported by JSPS KAKENHI (19K20337) and JST-PRESTO (JPMJPR1928).
DW was partially supported by NSERC and LG Electronics.
TS was partially supported by JSPS KAKENHI (18H03201), Japan Digital Design and JST CREST.

\bigskip
\newpage

\bibliographystyle{apalike}
\bibliography{ref}

\clearpage
\onecolumn
\renewcommand{\thesection}{\Alph{section}}
\renewcommand{\thesubsection}{\Alph{section}. \arabic{subsection}}
\renewcommand{\thetheorem}{\Alph{theorem}}
\renewcommand{\thelemma}{\Alph{lemma}}
\renewcommand{\theproposition}{\Alph{proposition}}
\renewcommand{\thedefinition}{\Alph{definition}}
\renewcommand{\thecorollary}{\Alph{corollary}}
\renewcommand{\theassumption}{\Alph{assumption}}
\renewcommand{\theexample}{\Alph{example}}

\setcounter{section}{0}
\setcounter{subsection}{0}
\setcounter{theorem}{0}
\setcounter{lemma}{0}
\setcounter{proposition}{0}
\setcounter{definition}{0}
\setcounter{corollary}{0}
\setcounter{assumption}{0}

{
\newgeometry{top=1in, bottom=1in,left=1.in,right=1.in}   
\renewcommand{\contentsname}{Table of Contents}
\tableofcontents

\newpage

\allowdisplaybreaks

\linewidth\hsize
{\centering \Large\bfseries Appendix: Convex Analysis of Mean Field Langevin Dynamics \par}

\section{OMITTED PROOFS}

\subsection{Preliminaries: Log-Sobolev Inequality}
\label{app:LSI}

For the proximal Gibbs distribution (\ref{eq:gibbs_dist}) of the mean field neural network, we can verify that the log-Sobolev inequality (Assumption \ref{assumption:ls_ineq}) holds \textit{uniformly} over $q$ under the boundedness assumptions on $\partial_z \ell(z,y)$ and $h_\theta(x)$ and $\ell_2$-regularization $r(\theta) = \|\theta\|_2^2$ by utilizing the following two facts.
Specifically, these two facts indicate that the Gibbs distribution $p_q$ whose potential is the sum of strongly concave function and bounded perturbation satisfies the log-Sobolev inequality. 

It is well-known that strongly log-concave densities satisfy the LSI with a dimension-free constant (up to the spectral norm of the covariance).
\begin{example}[\cite{bakry1985diffusions}]\label{ex:quadratic_sobolev}
Let $q \propto \exp(-f)$ be a probability density, where $f : \bR^p \rightarrow \bR$ is a smooth function. 
If there exists $c > 0$ such that $\nabla^2 f \succeq cI_p$, then $q(\theta)\rd\theta$ satisfies Log-Sobolev inequality with constant $c$.
\end{example}

In addition, the log-Sobolev inequality is preserved under bounded perturbation, as originally shown in \cite{holley1987logarithmic}. 
\begin{lemma}[\cite{holley1987logarithmic}]
\label{lem:sobolev_perturbation}
Let $q(\theta)\rd\theta$ be a probability distribution on $\bR^p$ satisfying the log-Sobolev inequality with a constant $\alpha$.
For a bounded function $B: \bR^p \rightarrow \bR$, we define a probability distribution $q_B(\theta)\rd\theta$ as follows:
\[ q_B(\theta)\rd \theta = \frac{\exp(B(\theta))q(\theta)}{ \bE_q[\exp(B(\theta))]} \rd\theta. \]
Then, $q_B \rd\theta$ satisfies the log-Sobolev inequality with a constant $\alpha / \exp(4\|B\|_\infty)$. 
\end{lemma}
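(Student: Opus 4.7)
I would invoke the classical Holley--Stroock perturbation argument, which compares both the entropy functional and the Dirichlet energy between $q$ and $q_B = e^{B}q/Z$ (where $Z = \bE_q[e^B]$) using the two-sided sup-norm bound $e^{-\|B\|_\infty} \leq e^{B(\theta)} \leq e^{\|B\|_\infty}$, and consequently $e^{-\|B\|_\infty} \leq Z \leq e^{\|B\|_\infty}$. These comparisons are then chained through the LSI assumed for $q$ to produce an LSI for $q_B$ with the stated constant.

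The key analytic device is the variational characterization of entropy: for any density $\mu$ and any $f \geq 0$,
\[ \bE_\mu[f\log f] - \bE_\mu[f]\log \bE_\mu[f] \;=\; \inf_{a>0}\,\bE_\mu\!\left[\,f\log(f/a) - (f-a)\,\right], \]
where, crucially, the integrand is \emph{pointwise nonnegative}, since $x\log x - x + 1 \geq 0$ for all $x \geq 0$ by convexity of $x\log x$ (apply with $x = f/a$). This pointwise nonnegativity is the central structural property, because it is what legitimizes replacing $e^B$ by its sup-norm upper bound inside the integral without any sign ambiguity.

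With that in hand, the argument splits into two routine comparisons. First, plugging the variational formula into $\bE_{q_B}$, pulling out $e^{B} \leq e^{\|B\|_\infty}$ pointwise in the nonnegative integrand, and using $Z^{-1} \leq e^{\|B\|_\infty}$, I would obtain an \emph{entropy comparison} of the form
\[ \bE_{q_B}[g^2\log g^2] - \bE_{q_B}[g^2]\log\bE_{q_B}[g^2] \;\leq\; e^{2\|B\|_\infty}\bigl(\bE_{q}[g^2\log g^2] - \bE_{q}[g^2]\log\bE_{q}[g^2]\bigr). \]
Second, rewriting $q\,\rd\theta = (Z/e^B)\,q_B\,\rd\theta$ and bounding the reweighting factor $Z/e^{B} \leq e^{2\|B\|_\infty}$ gives the reverse \emph{Dirichlet comparison}
\[ \bE_{q}[\|\nabla g\|_2^2] \;\leq\; e^{2\|B\|_\infty}\,\bE_{q_B}[\|\nabla g\|_2^2]. \]
Applying the assumed LSI for $q$ with $f = g^2$ and chaining these two estimates immediately yields the LSI for $q_B$ with constant $\alpha/\exp(4\|B\|_\infty)$, since $\tfrac{2}{\alpha}\cdot e^{2\|B\|_\infty}\cdot e^{2\|B\|_\infty} = \tfrac{2\,e^{4\|B\|_\infty}}{\alpha}$.

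The single delicate point in the whole plan is the pointwise nonnegativity of the variational integrand $f\log(f/a) - (f-a)$; without it, one could not pull $e^{B}$ out as $e^{\|B\|_\infty}$ inside the infimum without worrying about signs, and the clean factor of $e^{2\|B\|_\infty}$ on each side of the comparison would be lost. Everything else is a bookkeeping exercise tracking four factors of $e^{\|B\|_\infty}$: two from the entropy comparison and two from the Dirichlet comparison.
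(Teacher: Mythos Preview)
Your argument is the standard Holley--Stroock perturbation proof and is correct as written: the variational formula for entropy with its pointwise-nonnegative integrand is exactly the right tool to push the $e^{B}/Z$ factor out, and your bookkeeping of the four factors of $e^{\|B\|_\infty}$ is accurate.

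Note, however, that the paper does not actually prove this lemma; it merely states it with a citation to \cite{holley1987logarithmic} and then applies it to verify Assumption~\ref{assumption:ls_ineq} for the mean field neural network setting. So there is no ``paper's own proof'' to compare against here---your proposal supplies the classical argument that the paper takes for granted.
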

In our case, if $| \partial_z \ell (z,y)| \leq C_1$, $|h_\theta(x)| \leq C_5$, and $r(\theta) = \|\theta\|_2^2$, then the proximal Gibbs distribution (\ref{eq:gibbs_dist}) satisfies the log-Sobolev inequality with a constant $\alpha = \frac{2\lambda'}{\lambda \exp( 4C_1C_5\lambda^{-1} )}$. 
We remark that the exponential dependence in the LSI constant may be unavoidable in the most general setting \citep{menz2014poincare}. 

\bigskip

% \section{MISSING PROOFS}
\subsection{Continuous Time Analysis}
%----------------------------------------------------------------------------------
% Continuous Time Analysis
%----------------------------------------------------------------------------------
\begin{proof}[Proof of Corollary \ref{corollary:convergence}]
In the same way as the proof of Theorem \ref{theorem:convergence}, %we get
 \[ 2\alpha \lambda^2 \KL( q_s \| p_{q_s}) \leq - \frac{\rd }{\rd s}( \cL(q_s) - \cL(q_*) ). \]
By taking integral of this inequality on the interval $[t-1,t]$~($t\in \bN$), we get
\begin{align*}
    2\alpha \lambda^2 \int_{t-1}^t\KL( q_s \| p_{q_s}) \rd s 
    &\leq \cL(q_{t-1}) - \cL(q_*) - (\cL(q_{t}) - \cL(q_*))\\
    &\leq \exp( -2 \alpha \lambda (t-1))( \cL(q_0) - \cL(q_*) ).
\end{align*}
Therefore, there exists $s_t \in [t-1,t]$~($t \in \bN$) such that 
\[ \KL( q_{s_t} \| p_{q_{s_t}}) \leq \frac{\exp( -2 \alpha \lambda (t-1))}{2\alpha \lambda^2}( \cL(q_0) - \cL(q_*) ). \]
By taking the infimum of $\KL( q_{s} \| p_{q_{s}})$ over $[0,t]$~($t\in\bN$), we have 
\[ \inf_{s \in [0,t]} \KL( q_{s} \| p_{q_{s}}) \leq \frac{\exp( -2 \alpha \lambda (t-1))}{2\alpha \lambda^2}( \cL(q_0) - \cL(q_*) ). \]
\end{proof}

%----------------------------------------------------------------------------------
% Discrete Time Analysis
%----------------------------------------------------------------------------------
\subsection{Discrete Time Analysis}\label{subsec:proof_discrete-time}
\begin{proof}[Proof of Theorem \ref{theorem:discrete-time}]
We first present the one step analysis for the iteration ($\ref{eq:noisy-GD-2}$) with the step size $\eta$. 
Let us consider the stochastic differential equation:
\begin{equation}\label{eq:noisy-GD_dynamics}
    \rd\theta_t = - \nabla \frac{\delta F}{\delta q}(q^{(k)})(\theta_0)\rd t + \sqrt{2\lambda}\rd W_t,
\end{equation}
where $\theta_0 = \theta^{(k)}$ and $W_t$ is the Brownian motion in $\bR^d$ with $W_0 = 0$.
Then, the step (\ref{eq:noisy-GD-2}) is the solution of this equation at time $t$.
We denote by $q_{0t}(\theta_0,\theta_t)$ the joint probability distribution of $(\theta_0,\theta_t)$ for time $t$, 
and by $q_{t|0},~q_{0|t}$ and $q_0,~q_t$ conditional and marginal distributions. That is, it holds that $q_0 = q^{(k)}$, $q_t = q^{(k+1)}_t$ (i.e., $\theta_t \disteq \theta^{(k+1)}_t$), and
%This can be written by the product of the conditionals and marginals distributions as follows:
\[ q_{0t}(\theta_0,\theta_t) = q_0(\theta_0) q_{t|0}(\theta_t | \theta_0) = q_t(\theta_t) q_{0|t}(\theta_0 | \theta_t). \]
The continuity equation of $q_{t|0}$ conditioned on $\theta_0$ is given as (see Section 7 of \cite{vempala2019rapid} for details):
\begin{equation*}
    \frac{\partial q_{t|0}(\theta_t|\theta_0) }{\partial t} 
    = \nabla \cdot \left( q_{t|0}(\theta_t|\theta_0) \nabla \frac{\delta F}{\delta q}(q_0)(\theta_0) \right) + \lambda \Delta q_{t|0}(\theta_t|\theta_0).
\end{equation*}
Therefore, we obtain the continuity equation of $q_t$:
\begin{align}
    \frac{\partial q_{t}(\theta_t) }{\partial t}
    &= \int \frac{\partial q_{t|0}(\theta_t|\theta_0) }{\partial t} q_0(\theta_0) \rd\theta_0 \notag\\
    &= \int \left( \nabla \cdot \left( q_{0t}(\theta_0,\theta_t) \nabla \frac{\delta F}{\delta q}(q_0)(\theta_0) \right) 
    + \lambda \Delta q_{0t}(\theta_0,\theta_t) \right) \rd\theta_0  \notag\\
    &= \nabla \cdot \left( q_t(\theta_t) \int q_{0|t}(\theta_0|\theta_t) \nabla \frac{\delta F}{\delta q}(q_0)(\theta_0) \rd\theta_0 \right)
    + \lambda \Delta q_{t}(\theta_t) \notag\\
    &= \nabla \cdot \left( q_t(\theta_t) \left( \bE_{\theta_0|\theta_t}\left[ \nabla \frac{\delta F}{\delta q}(q_0)(\theta_0) \middle| \theta_t \right]
    + \lambda \nabla \log q_t (\theta_t) \right) \right) \notag\\
    &= \lambda \nabla \cdot \left( q_t(\theta_t) \nabla \log\frac{q_t}{p_{q_t}}(\theta_t) \right) \notag\\ 
    % + \lambda \nabla \log q_t (\theta_t) \right) \right) \\
    &+\nabla \cdot \left( q_t(\theta_t) \left( \bE_{\theta_0|\theta_t}\left[ \nabla \frac{\delta F}{\delta q}(q_0)(\theta_0) \middle| \theta_t \right] 
    - \nabla \frac{\delta F}{\delta q}(q_t)(\theta_t) \right) \right), \label{eq:theorem:error-FP}
\end{align}
where $p_{q_t}(\cdot) \propto \exp\left( -\frac{1}{\lambda} \nabla \frac{\delta F}{\delta q}(q_t)(\cdot) \right)$.
For simplicity, we write $\delta_t(\cdot) = \bE_{\theta_0 \sim q_{0|t}}\left[ \nabla \frac{\delta F}{\delta q}(q_0)(\theta_0) \middle| \theta_t=\cdot \right] - \nabla \frac{\delta F}{\delta q}(q_t)(\cdot)$.
By Assumption \ref{assumption:ls_ineq} and (\ref{eq:theorem:error-FP}), for $0 \leq t \leq \eta$, we have
\begin{align*}
    \frac{\rd \cL}{\rd t}(q_t) 
    &= \int \frac{\delta \cL}{\delta q}(q_t)(\theta)\frac{\partial q_t}{\partial t}(\theta) \rd \theta \\
    &= \lambda \int \frac{\delta \cL}{\delta q}(q_t)(\theta) \nabla \cdot \left( q_t (\theta)\nabla \log \frac{q_t}{p_{q_t}}(\theta)\right) \rd \theta \\
    &+ \int \frac{\delta \cL}{\delta q}(q_t)(\theta) \nabla \cdot \left( q_t(\theta) \delta_t(\theta)\right) \rd \theta\\
    &= - \lambda\int q_t(\theta) \nabla \frac{\delta \cL}{\delta q}(q_t)(\theta)^\top \nabla \log \frac{q_t}{p_{q_t}}(\theta) \rd \theta \\
    &- \int q_t(\theta) \nabla \frac{\delta \cL}{\delta q}(q_t)(\theta)^\top \delta_t(\theta) \rd \theta\\
    &= - \lambda^2 \int q_t(\theta) \left\| \nabla \log \frac{q_t}{p_{q_t}}(\theta) \right\|_2^2 \rd \theta \\
    &- \int q_{0t}(\theta_0,\theta) \lambda\nabla \log \frac{q_t}{p_{q_t}}(\theta)^\top \left( \nabla \frac{\delta F}{\delta q}(q_0)(\theta_0) 
    - \nabla \frac{\delta F}{\delta q}(q_t)(\theta) \right) \rd \theta_0 \rd\theta\\
    &\leq - \lambda^2 \int q_t(\theta) \left\| \nabla \log \frac{q_t}{p_{q_t}}(\theta) \right\|_2^2 \rd \theta \\
    &+  \frac{1}{2} \int q_{0t}(\theta_0,\theta) \left( \lambda^2\left\| \nabla \log \frac{q_t}{p_{q_t}}(\theta) \right\|_2^2 
    + \left\| \nabla \frac{\delta F}{\delta q}(q_0)(\theta_0) 
    - \nabla \frac{\delta F}{\delta q}(q_t)(\theta) \right\|_2^2 \right) \rd\theta_0 \rd\theta\\
    &\leq - \frac{\lambda^2}{2} \int q_t(\theta) \left\| \nabla \log \frac{q_t}{p_{q_t}}(\theta) \right\|_2^2 \rd \theta \\
    &+ \frac{1}{2} \bE_{(\theta_0,\theta)\sim q_{0t}}\left[ \left\| \nabla \frac{\delta F}{\delta q}(q_0)(\theta_0) 
    - \nabla \frac{\delta F}{\delta q}(q_t)(\theta) \right\|_2^2 \right] \\
    &\leq - \alpha \lambda^2 \KL( q_t \| p_{q_t}) + \frac{1}{2} \delta_{q_0,t} \\
    &\leq - \alpha\lambda ( \cL(q_t) - \cL(q_*) ) + \frac{1}{2} \delta_{\eta},
\end{align*}
where we used $(\theta_0,\theta_t) \disteq (\theta^{(k)},\theta^{(k+1)}_t)$ to bound the last expectation by $\delta_{q^{(k)},t} \leq \delta_\eta$. 
Thus, for $0 \leq t \leq \eta$,  we get 
\[ \frac{\rd }{\rd t}\left( \cL(q_t) - \cL(q_*) - \frac{ \delta_{\eta}}{2 \alpha \lambda} \right) 
\leq - \alpha\lambda \left( \cL(q_t) - \cL(q_*) -  \frac{\delta_\eta}{2\alpha\lambda} \right). \]
Noting $q_\eta = q^{(k+1)}$ and $q_0 = q^{(k)}$, the Gronwall's inequality leads to 
\[ \cL(q^{(k+1)}) - \cL(q_*) - \frac{ \delta_{\eta}}{2 \alpha \lambda} 
\leq \exp( -\alpha\lambda\eta )\left( \cL(q^{(k)}) - \cL(q_*) - \frac{ \delta_{\eta}}{2 \alpha \lambda} \right). \]

This reduction holds at every iteration of (\ref{eq:noisy-GD_dynamics}). Hence, we arrive at the desired result,  
\[ \cL(q^{(k)}) - \cL(q_*)  
\leq \frac{ \delta_{\eta}}{2 \alpha \lambda} 
+ \exp( -\alpha\lambda\eta k )\left( \cL(q^{(0)}) - \cL(q_*) - \frac{ \delta_{\eta}}{2 \alpha \lambda} \right). \]
\end{proof}

\begin{proof}[Proof of Lemma \ref{lemma:F-diff_second_moment}]
For notational simplicity, we use $\theta, \theta', q, q'$ to represent $\theta^{(k)}, \theta^{(k+1)}_t, q^{(k)}, q^{(k+1)}_t$ appearing in $\delta_{q^{(k)},t}$.
Recall the definition of $F$ in Section \ref{subsec:mfnn}, we have
\begin{align}
     \left\| \nabla  \frac{\delta F}{\delta q}(q)(\theta) -  \nabla \frac{\delta F}{\delta q}(q')(\theta')  \right\|_2 
     &\leq \bE_{(X,Y)}\left[\left\| \partial_z\ell(h_q(X),Y) \partial_\theta h_{\theta}(X) 
     - \partial_z\ell(h_{q'}(X),Y) \partial_\theta h_{\theta'}(X)  \right\|_2 \right] \notag\\
     &+ 2 \lambda' \| \theta -  \theta' \|_2 \notag\\
     &\leq \bE_{(X,Y)}\left[ \left\| (\partial_z\ell(h_q(X),Y)- \partial_z\ell(h_{q'}(X),Y)) \partial_\theta h_{\theta}(X)  \right\|_2 \right] \notag\\
     &+ \bE_{(X,Y)}\left[ \left\| \partial_z\ell(h_{q'}(X),Y)( \partial_\theta h_{\theta}(X) - \partial_\theta h_{\theta'}(X) )  \right\|_2 \right] \notag\\
     &+ 2 \lambda' \| \theta -  \theta' \|_2 \notag\\
     &\leq C_2 C_3 \bE_{X}[ | h_q(X) - h_{q'}(X) | ]
     + (C_1C_4 + 2\lambda' )\left\| \theta - \theta' \right\|_2. \label{lem:F_diff_1}
\end{align}
The expectation of $\left\| \theta - \theta' \right\|_2^2$ can be bounded as follows:
\begin{align}
    \bE_{(\theta,\theta')}\left[ \left\| \theta - \theta' \right\|_2^2 \right]
    &= \bE_{(\theta,\xi)}\left[ \left\| t \nabla \frac{\delta F}{\delta q}(q)(\theta) - \sqrt{2\lambda t}\xi \right\|_2^2 \right]  \notag\\
    &\leq 2t^2 \bE_{\theta}\left[ \left\| \nabla \frac{\delta F}{\delta q}(q)(\theta) \right\|_2^2 \right]
    + 4\lambda t \bE_{\xi}\left[ \left\|\xi \right\|_2^2 \right]  \notag\\
    &\leq 4t^2 \bE_{\theta}\left[ \left\| \bE_{(X,Y)}\left[  \partial_z\ell(h_q(X),Y) \partial_\theta h_{\theta}(X) \right] \right\|_2^2 
    + 4\lambda'^2\| \theta \|_2^2 \right]
    + 4\lambda t d \notag\\
    &\leq 4t^2 C_1^2 C_3^2
    + 16t^2 \lambda'^2 \bE_{\theta}\left[ \| \theta \|_2^2 \right]
    + 4\lambda t d \notag\\
    &\leq 20\eta ( \eta C_1^2 C_3^2 + \lambda d ). \label{lem:F_diff_2}
\end{align}

Moreover, $| h_q(x) - h_{q'}(x) |$ can be bounded as follows:
\begin{align}
    | h_q(x) - h_{q'}(x) | 
    &= \left|\bE_{(\theta,\theta')}[ h_{\theta}(x)  - h_{\theta'}(x) ] \right| \notag \\
    &\leq C_3 \bE_{(\theta,\theta')}\left[\| \theta - \theta' \|_2 \right] \notag \\
    &\leq C_3 \sqrt{ \bE_{(\theta,\theta')}\left[\| \theta - \theta' \|_2^2 \right] }. \label{lem:F_diff_3}
\end{align}

Therefore, we get by (\ref{lem:F_diff_1}), (\ref{lem:F_diff_2}), and (\ref{lem:F_diff_3}),
\begin{align*}
     \bE_{(\theta,\theta')}\left[ \left\| \nabla  \frac{\delta F}{\delta q}(q)(\theta) -  \nabla \frac{\delta F}{\delta q}(q')(\theta')  \right\|_2^2 \right]
     &\leq 2 C_2^2 C_3^2 (\bE_{X}[ | h_q(X) - h_{q'}(X) | ] )^2 \\
     &+ 2(C_1C_4 + 2\lambda' )^2 \bE_{(\theta,\theta')}\left[  \left\| \theta - \theta' \right\|_2^2 \right] \\
     &\leq 2 ( C_2^2 C_3^4 + (C_1C_4 + 2\lambda' )^2 ) \bE_{(\theta,\theta')}\left[\| \theta - \theta' \|_2^2 \right] \\
     &\leq 40 \eta ( C_2^2 C_3^4 + (C_1C_4 + 2\lambda' )^2 ) ( \eta C_1^2 C_3^2 + \lambda d ).
\end{align*}

Finally, we show the same bound on second moment of $\|\theta^{(k+1)}\|_2$ as on $\|\theta^{(k)}\|_2$.
Using the inequality $(a+b)^2 \leq \left( 1 + \gamma \right)a^2 + \left(1+ \frac{1}{\gamma}\right)b^2$ with $\gamma = \frac{2\eta\lambda'}{1-2\eta \lambda'}$, we get
\begin{align*}
    \bE_{\theta^{(k+1)}}\left[ \| \theta^{(k+1)}\|_2^2 \right]
    &= \bE_{(\theta,\xi)}\left[ \left\| (1-2\lambda' \eta) \theta - \eta \bE_{(X,Y)}\left[  \partial_z\ell(h_q(X),Y) \partial_\theta h_{\theta}(X) \right] + \sqrt{2\lambda \eta}\xi \right\|_2^2 \right] \\
    &\leq \bE_{(\theta,\xi)}\left[ \left( (1-2\lambda' \eta) \|\theta\|_2 + \eta C_1 C_3 + \sqrt{2\lambda \eta} \left\| \xi \right\|_2 \right)^2 \right] \\
    &= \bE_{(\theta,\xi)}\left[ \left(1+\gamma \right) (1-2\lambda' \eta)^2 \|\theta\|_2^2 + \left(1+\frac{1}{\gamma}\right)\left( \eta C_1 C_3 + \sqrt{2\lambda \eta} \left\| \xi \right\|_2 \right)^2 \right] \\
    &\leq (1-2\lambda' \eta) \bE_{\theta}\left[ \|\theta\|_2^2 \right] 
    + \frac{1}{\lambda'}\left( \eta C_1^2 C_3^2 + 2\lambda \bE_{\xi}\left[ \left\| \xi \right\|_2^2 \right] \right) \\
    &\leq (1-2\lambda' \eta) \frac{\eta C_1^2 C_3^2 + 2\lambda d}{2\eta\lambda'^2}
    + \frac{1}{\lambda'}\left( \eta C_1^2 C_3^2 + 2\lambda d \right) \\
    &= \frac{\eta C_1^2 C_3^2 + 2\lambda d}{2\eta \lambda'^2}.
\end{align*}
\end{proof}

%----------------------------------------------------------------------------------
% Dual Theorems
%----------------------------------------------------------------------------------
\subsection{Duality Theorems}
\begin{proof}[Proof of Theorem \ref{theorem:dual}]
It is well known that $\nabla \ell_i$ and $\nabla \ell_i^*$ give inverse mapping to each other. 
Hence, we know that $\nabla \ell_i^* (g_{q,i}) = \nabla \ell_i^* ( \nabla \ell_i (h_q(x_i) ) ) = h_q(x_i)$ and 
\begin{align}
    \ell_i^*( g_{q,i}) 
    = \sup_{z \in \bR } \{ z g_{q,i} - \ell_i(z) \} 
    = h_q(x_i) g_{q,i} - \ell_i ( h_q(x_i) ). \label{eq:convex_conjugate_eq}
\end{align}
Recall the definitions of $p_q$, $g_q$, and $q_{g}$, we see that $q_{g_q} \propto p_q$, and hence
\begin{align}
    \KL( q \| p_{q}) 
    = \bE_{\theta \sim q}\left[ \log \frac{q}{p_{q}}(\theta) \right] 
    = \bE_{\theta \sim q}\left[ \log q(\theta) - \log q_{g_q}(\theta) \right]
    + \log \int q_{g_q}(\theta) d\theta. \label{eq:kl_eq}
\end{align}
Combining (\ref{eq:convex_conjugate_eq}) and (\ref{eq:kl_eq}), we get for any $q \in \cP$,
\begin{align*}
    \cD(g_q) 
    &= -\frac{1}{n}\sum_{i=1}^n \ell_i^*( g_{q,i} ) 
    - \lambda \log\int q_{g_q}(\theta) \rd\theta \\
    &= \frac{1}{n} \sum_{i=1}^n \left( \ell_i ( h_q(x_i) ) - h_q(x_i) g_{q,i} \right)
    - \lambda \KL( q \| p_{q}) + \lambda \bE_{\theta \sim q}\left[ \log q(\theta) - \log q_{g_q}(\theta) \right] \\
    &= \frac{1}{n} \sum_{i=1}^n \left( \ell_i ( h_q(x_i) ) - h_q(x_i) g_{q,i} \right)
    - \lambda \KL( q \| p_{q}) + \lambda \bE_{\theta \sim q}\left[ \log q(\theta) \right] 
    + \bE_{\theta \sim q}\left[ \frac{1}{n}\sum_{i=1}^n h_{\theta}(x_i)g_{q,i} + \lambda' \|\theta\|_2^2 \right]\\
    &= \cL(q) - \lambda \KL( q \| p_{q}).
\end{align*}
This concludes the proof.
\end{proof}

\begin{proof}[Proof of Theorem \ref{theorem:dual2}]
From direct computation, we get for $q \in \cP$,
\begin{align*}
    \frac{\delta \cL}{\delta q'}(q')|_{q'=p_q}(\theta)
    &= \frac{1}{n}\sum_{i=1}^n \partial_z\ell(h_{p_q}(x_i),y_i)h_{\theta}(x_i) + \lambda' \|\theta\|_2^2 
    + \lambda \log p_q (\theta) \\
    &= \frac{1}{n}\sum_{i=1}^n (\partial_z\ell(h_{p_q}(x_i),y_i) - \partial_z\ell(h_{q}(x_i),y_i))h_{\theta}(x_i)  
    + const.
\end{align*}
Hence, we have
\begin{align*}
    \left| \int  \frac{\delta \cL}{\delta q'}(q')|_{q'=p_q}(\theta) (q-p_q)(\theta) \rd\theta \right|
    &= \left| \int \frac{1}{n}\sum_{i=1}^n (\partial_z\ell(h_{p_q}(x_i),y_i) - \partial_z\ell(h_{q}(x_i),y_i))h_{\theta}(x_i) (q-p_q)(\theta) \rd\theta \right|\\
    &= \left| \frac{1}{n}\sum_{i=1}^n (\partial_z\ell(h_{p_q}(x_i),y_i) - \partial_z\ell(h_{q}(x_i),y_i)) ( h_q(x_i) - h_{p_q}(x_i) ) \right|\\
    &\leq \frac{C_2}{n}\sum_{i=1}^n | h_{q}(x_i) - h_{p_q}(x_i)|^2 \\
    &\leq B^2 C_2\|q - p_q\|_{L_1(\rd\theta)}^2 \\
    &\leq 2B^2 C_2 \KL(q\|p_q),
\end{align*}
where we used Pinsker's inequality for the last inequality. 

By the convexity of $\cL$, we get
\begin{align*}
    \cL(p_q) - \cD(g_q) - \lambda \KL(q\|p_q)
    &= \cL(p_q) - \cL(q) \\
    &\leq - \int  \frac{\delta \cL}{\delta q'}(q')|_{q'=p_q}(\theta) (q-p_q)(\theta) \rd\theta \\
    &\leq 2B^2 C_2 \KL(q\|p_q).
\end{align*}

Therefore, we obtain $\cL(p_q) - \cD(g_q) \leq (\lambda + 2B^2 C_2) \KL(q\|p_q)$.
\end{proof}

\bigskip

\section{ADDITIONAL DETAILS}

\subsection{Computation of the Dual Objective}
\label{app:compute-dual}
We briefly outline the estimation of the dual objective (\ref{prob:erm-dual}), which consists of the sum of Fenchel conjugate functions and the normalization term as follows:
\begin{equation*}
    \cD(g) = -\frac{1}{n}\sum_{i=1}^n \ell_i^*(g_i) 
    - \lambda \log\int q_g(\theta) \rd\theta,
\end{equation*}
where $q_g(\theta) = \exp\left( - \frac{1}{\lambda} \left( \frac{1}{n}\sum_{i=1}^n h_{\theta}(x_i)g_i + \lambda'\|\theta\|_2^2 \right) \right)$.

The Fenchel conjugate $\ell_i^*$ can be explicitly described for typical loss functions. For instance, for the squared loss function $\ell_i(z) = 0.5(z-y_i)^2$, its Fenchel conjugate is $\ell_i^*(g) = 0.5 g^2 + g y_i$.

Direct computation of the normalization term of $q_g$ is difficult in general, but it can be efficiently estimated by the following procedure.
First, we reformulate this term as follows:
\begin{align*}
    \int q_g(\theta) \rd\theta
    &= \int \exp\left( - \frac{1}{\lambda} \left( \frac{1}{n}\sum_{i=1}^n h_{\theta}(x_i)g_i + \lambda'\|\theta\|_2^2 \right) \right) \rd\theta \\
    &= Z \int \exp\left( - \frac{1}{\lambda n}\sum_{i=1}^n h_{\theta}(x_i)g_i \right) \frac{\exp\left( -\frac{\lambda'}{\lambda}\|\theta\|_2^2 \right)}{Z} \rd\theta,
\end{align*}
where $Z$ is the normalization term of the Gaussian distribution appearing above, that is, $Z = \int \exp\left( -\frac{\lambda'}{\lambda}\|\theta\|_2^2 \right) \rd\theta$. 
Hence, we can estimate the normalization term of $q_g$ by the expectation with respect to the Gaussian distribution and computation of $Z$.
This expectation can be approximated by obtaining samples from the Gaussian distribution in proportion to $\exp\left( -\frac{\lambda'}{\lambda}\|\theta\|_2^2\right)$, and its normalization term is exactly $Z = \left( \frac{\pi \lambda}{\lambda'} \right)^{d/2}$. 
We can use the same samples from the Gaussian distribution for computing this expectation to reduce the computational cost.
Note that the normalization constant of the proximal Gibbs distribution $p_q$ can be approximated via the same procedure. 

\newpage

}

\end{document}